\let\classAND\AND
\let\AND\relax
\let\AND\classAND
\theoremstyle{plain}
\newtheorem{theorem}{Theorem}[section]
\newtheorem{proposition}[theorem]{Proposition}
\newtheorem{lemma}[theorem]{Lemma}
\theoremstyle{definition}
\newtheorem{definition}[theorem]{Definition}
\newtheorem{assumption}[theorem]{Assumption}
\theoremstyle{remark}
\DeclareMathOperator*{\argmax}{arg\,max}
\DeclareMathOperator*{\argmin}{arg\,min}
\newcommand{\map}[3]{#1:#2\rightarrow #3}
\newcommand{\snorm}[1]{\Vert #1 \Vert}
\crefname{assumption}{Assumption}{Assumptions}
\crefname{remark}{Remark}{Remarks}
\crefname{prop}{Proposition}{Propositions}
\crefname{theorem}{Theorem}{Theorems}
\crefname{corollary}{Corollary}{Corollaries}
\crefname{section}{Section}{Sections}
\crefname{appendix}{Appendix}{Appendices}
\crefname{figure}{Figure}{Figures}
\crefname{table}{Table}{Tables}
\begin{document}

\title{Robustness to Subpopulation Shift with Domain Label Noise via Regularized Annotation of Domains}




\author{\name Nathan Stromberg \email nstrombe@asu.edu \\ \addr ASU
      \AND
      \name Rohan Ayyagari \email rayyaga2@asu.edu \\
      \addr ASU
      \AND
      \name Monica Welfert \email mwelfert@asu.edu\\
      \addr ASU 
      \AND
      \name Sanmi Koyejo \email sanmi@cs.stanford.edu\\
      \addr Stanford 
      \AND
      \name Richard Nock \email richardnock@google.com\\
      \addr Google 
      \AND
      \name Lalitha Sankar \email lsankar@asu.edu\\
      \addr ASU 
      \AND
      }


\maketitle


\begin{abstract}
Existing methods for last layer retraining that aim to optimize worst-group accuracy (WGA) 
rely heavily on well-annotated groups in the training data. We show, both in theory and practice, that annotation-based data augmentations using either downsampling or upweighting for WGA are susceptible to domain annotation noise. The WGA gap is exacerbated in high-noise regimes for models trained with vanilla empirical risk minimization. To this end, we introduce Regularized Annotation of Domains (RAD) to train robust last layer classifiers without needing explicit domain annotations. Our results show that RAD is competitive with other recently proposed domain annotation-free techniques. Most importantly, RAD outperforms state-of-the-art annotation-reliant methods even with only 5\% noise in the training data for several publicly available datasets.
\end{abstract}

\section{Introduction}
Last-layer retraining (LLR) has emerged as a method for using embeddings from pretrained models to quickly and efficiently learn classifiers in new domains or for new tasks. Because only the linear last layer is retrained, LLR allows transferring to new domains/tasks with much fewer examples than required to train a deep network from scratch. One promising use of LLR is to retrain deep models with a focus on fairness or robustness, and because data is frequently made up of distinct subpopulations (oft referred to as groups\footnote{we will use these terms interchangeably} which we take as a tuple of class and domain labels)~\cite{yang2023change}, ensuring both fairness between subpopulations and robustness to shifts among subpopulations remains an open problem.

One way to be robust to these types of shifts and/or to be fair across groups is to optimize for the accuracy of the group that achieves the lowest accuracy, i.e., the worst-group accuracy (WGA). WGA thus presents a lower bound on the overall accuracy of a classifier under any subpopulation shift, thereby assuring that all groups are well classified.

State-of-the-art (SOTA) methods for optimizing WGA generally modify either the distribution of the training data \cite{kirichenko23last, Giannone_Havrylov_Massiah_Yilmaz_Jiao_2022, labonte23towards} or the training loss \cite{arjovsky2019invariant,Liu2021JustTT, sagawa2019distributionally, qiu2023simple} to account for imbalance amongst groups and successfully learn a classifier which is fair across groups.
These methods also use some form of implicit or explicit regularization in the retraining step to limit overfitting to either the group imbalances or the spuriously correlated features in the training data. 
\citet{kirichenko23last} argue that strong $\ell_1$ regularization plus data augmentation helps to learn ``core features,'' i.e., those correlated with the label for all examples. Thus, one can instead use regularization without data augmentation to learn spurious features explicitly, in which case misclassified examples can be viewed as belonging to minority groups. This allows us to avoid the explicit use of group annotations.

We examine two representative data augmentation methods, namely downsampling \cite{kirichenko23last, labonte23towards} and upweighting \cite{YoubiIdrissi2021SimpleDB, Liu2021JustTT, qiu2023simple}, which achieve SOTA WGA with simple modifications to the data and loss, respectively. In the simplest setting, each of these methods requires access to correctly annotated groups to balance the contribution of each group to the loss. 
In practice, group annotations are often noisy \cite{Wei2021LearningWN}, which can be caused by either domain noise, label noise, or both. Label noise generally affects classifier training and data augmentation, making analysis more challenging. We consider only domain noise so that we can compare to existing methods for enhancing WGA. 
Furthermore, focusing on domain noise presents a stepping stone to analyzing group noise in general. 

\subsection{Our Contributions}
We present theoretical guarantees for the WGA under domain noise when modeling last layer representations as symmetric mixtures. We show that both DS and UW achieve identical WGA and degrade significantly with an increasing percentage of symmetric domain noise, in the limit degrading to the performance of ERM. This is further confirmed with numerical experiments for a synthetic Gaussian mixture dataset modeling latent representations. 
    
Our key contribution is a two-step methodology involving:
(i) \emph{regularized annotation of domains} (RAD) to pseudo-annotate examples by using a highly regularized model trained to learn spuriously correlated features. By learning the spurious correlations,
RAD constructs a set of examples for which such correlations do not hold; we identify these as minority examples. \\
(ii) LLR using all available data, while upweighting (UW) examples in the pseudo-annotated minority. 
 
 \textit{This combined approach, denoted RAD-UW}, captures the key observation made by many that regularized LLR methods are successful as they implicitly differentiate between ``core" and ``spurious" features. 
We test RAD-UW on several large publicly available datasets and demonstrate that it 
achieves SOTA WGA even with noisy domain annotations. Additionally, RAD-UW incurs only a minor opportunity cost for not using domain labels even in the noise-free setting.

\subsection{Related Works}
Downsampling has been explored extensively in the literature and appears to be the most common method for achieving good WGA. \citet{kirichenko23last} propose deep feature reweighting (DFR), which downsamples the majority groups to the size of the smallest group and then retrains the last layer with strong $\ell_1$ regularization. \citet{Chaudhuri_Ahuja_Arjovsky_Lopez-Paz_2022} explore the effect of downsampling theoretically and show that downsampling can increase WGA under certain data distribution assumptions. \citet{labonte23towards} use a variation on downsampling to achieve competitive WGA without domain annotations using implicitly regularized identification models.

Upweighting has been used as an alternative to downsampling as it does not require removing any data. \citet{YoubiIdrissi2021SimpleDB} show that upweighting relative to the proportion of groups can achieve strong WGA, and \citet{Liu2021JustTT} extend this idea (using upsampling from the same dataset, which is equivalent to upweighting) to the domain annotation-free setting using early-stopped models. \citet{qiu2023simple} use the loss of the pretrained model to upweight samples that are difficult to classify, thus circumventing the need for domain annotations.

Domain annotation-free methods generally use a secondary model to identify minority groups. \citet{qiu2023simple} use the pretrained model itself but do not explicitly identify minority examples and instead upweight proportionally to the loss. Unfortunately, this ties their identification method to the choice of the loss. 
\citet{Liu2021JustTT} and \citet{Giannone_Havrylov_Massiah_Yilmaz_Jiao_2022} both consider fully retraining the pretrained model as opposed to only the last layer, but their method of minority identification using an early stopped model is considered in the last layer in \citet{labonte23towards}. \citet{labonte23towards} not only consider early stopping as implicit regularization for their identification model, but also dropout (randomly dropping weights during training). 

\citet{Wei2021LearningWN} show that human annotation of image class labels can be noisy with up to a 40\% noise proportions, thus motivating the need for domain annotation-free methods for WGA. It is likely that domain annotations are noisy with similar frequency, especially since class and domain labels are frequently interchanged, like in CelebA \cite{liu2015faceattributes}.
Domain noise has not been widely considered in the WGA literature, but \citet{Oh2022ImprovingGR} consider robustness to class label noise. They utilize predictive uncertainty from a robust identification model to select an unbiased retraining set. 

\textbf{Our work} differs from SOTA methods on several fronts. First, we present a theoretical analysis of DS and UW under noise (and structured distributions for the last layer representation), which motivates our method. Secondly, we provide intuition for our RAD-UW method and the need for explicit regularization via arguments about ``spurious" and ``core'' features \`a la DFR \cite{kirichenko23last}. 
Finally, our method requires access only to the last layer of the pretrained model and only the final weights. This is in contrast to \citet{labonte23towards} in which early-stopped versions of the base model are needed to get the best performance and \citet{Liu2021JustTT} in which the entire model weights are needed. 
 With our extensive experiments, we demonstrate that downsampling methods such as those presented in \citet{kirichenko23last} and \citet{labonte23towards} have a significantly higher variance than upweighting methods such as RAD-UW.
\section{Problem Setup}
We consider a supervised classification setting and assume that the LLR methods have access to a representation of the \textit{ambient} (original high-dimensional data such as images, etc.) data, the ground-truth label, as well as the (possibly noisy) domain annotation. Taken together, the label and domain combine to define the group annotation for any sample. 
More formally, the training
dataset is a collection of i.i.d. tuples of the random variables $(X_a,Y,D) \sim P_{X_aYD}$,  
where $X_a \in \mathcal{X}_a$ is the ambient high-dimensional sample, $Y \in \mathcal{Y}$ is the class label, and  $D \in \mathcal{D}$ is the domain label. 
Here, we present the problem as generic multi-class, multi-domain learning, but for ease of analysis, we will later restrict ourselves to the binary class, binary domain setting. 
Since the focus here is on learning the linear last layer, we denote the \textit{latent} representation that acts as an input to this last layer by $X\coloneqq \phi(X_a)$ for an embedding function $\map{\phi}{\mathcal{X}_a}{\mathcal{X}\subseteq\mathbb{R}^m}$ such that the LLR dataset is $(X,Y,D)\sim P_{XYD}$. 

{The tuples $(Y,D)$ of class and domain labels partition the examples into $g\coloneqq | \mathcal{Y} \times \mathcal{D}|$ different groups} with priors $\pi^{(y,d)}\coloneqq P(Y=y,D=d)$ for $(y,d) \in \mathcal{Y}\times\mathcal{D}$.
We denote the linear correction applied in the latent space of a pretrained model as $\map{f_\theta}{\mathcal{X}}{\mathbb{R}^{|\mathcal{Y}|}}$, which is parameterized by 
a linear decision boundary $\theta=(w,b)\in\mathbb{R}^{m \times |\mathcal{Y}|}\times \mathbb{R}^{|\mathcal{Y}|}$ given by 
\begin{equation}
    f_\theta(x) = \sigma(w^Tx + b).
\end{equation}
where $\map{\sigma}{\mathbb{R}^{|\mathcal{Y}|}}{(0,1)^{|\mathcal{Y}|}}$ is the link function (e.g., softmax). The prediction of $f_\theta(x)$ is given by 
\begin{equation}
    \hat Y = \argmax_i f_\theta^{(i)}(x).
    \label{eq:pred}
\end{equation}
A general formulation for obtaining the optimal $f_{\theta^*}$ is: 
\begin{equation}
    \theta^* = \argmin_\theta \mathbb{E}_{P_{XYD}}[c(Y,D)\ell(f_\theta(X),Y)], \; 
    \label{eq:gen-opt}
\end{equation}
where $\map{\ell}{\mathbb{R}^{|\mathcal{Y}|}\times \mathcal{Y}}{\mathbb{R}_+}$ is a loss function and $\map{c}{\mathcal{Y} \times \mathcal{D}}{\mathbb{R}}$ is the per-group cost which can be used to correct for imbalances in the data \cite{YoubiIdrissi2021SimpleDB} or to correct for noise in the training data \cite{patrini2017making}. 

We desire a model that makes fair decisions across groups, and therefore, we
evaluate worst-group accuracy, i.e., the minimum accuracy among all groups, defined for a model $f_\theta$ as
\begin{equation}
    \text{WGA}(f_\theta) \coloneqq \min_{(y,d)\in\mathcal{Y}\times\mathcal{D}} A^{(y,d)}(f_\theta),
    \label{eq:wge}
\end{equation}
where $A^{(y,d)}(f_\theta)$ denotes the per-group accuracy for the group $(y,d) \in \mathcal{Y}\times\mathcal{D}$, 
\begin{equation}
    A^{(y,d)}(f_\theta) \coloneqq P_{X|YD}(\hat Y = Y | Y=y,D=d),
    \label{eq:group-error}
\end{equation}
where $\hat Y$ is calculated as in \eqref{eq:pred}.
\subsection{Data Augmentation}
\textbf{Downsampling} (DS) reduces the number of examples in majority groups such that minority and majority groups have the same sample size. In practice, this reduces the dataset size from $n$ to $g \times n_{min}$ where $n_{min}$ is the number of examples in the smallest group. In the population setting (as $n\rightarrow \infty$ and $n_{min} \approx \pi_{min} \times n$), this is equivalent to setting all group priors equal to $1/g$. 

\textbf{Upweighting} (UW) does not remove data but weights the loss more for minority examples and less for majority samples. Generally, the upweighting factor $c$ can be a hyperparameter, though often one uses the inverse of the prevalence of the group in practice, which estimates \begin{align}
    c(y,d) = \frac{1}{g \pi^{(y,d)}}.
\end{align} 
This selection is motivated by the minimization problem in \eqref{eq:gen-opt}, where this choice of $c$ allows the optimization to happen independently of the group priors. This is explored more in \cref{prop:ds-uw-eq}.
\subsection{Domain Noise}
We model noise in the domain label as symmetric label noise (SLN) with probability (w.p.) $p$. That is, for a sample $(X,Y,D) \sim P_{XYD}$, we do not observe $D$ directly but $D \text{ w.p. } 1-p$ and $\bar{D}$ {w.p.} $p$ where $\bar{D}$ is drawn uniformly at random from $\mathcal{D} \setminus \{D\}$. This is equivalent to flipping $D$ w.p. $p$ in the binary domain setting. 

In practice, while the training data is usually regarded as noisy, it is frequently necessary to have a small holdout that is clean and fully annotated. This allows for hyperparameter selection without being affected by noisy annotations and aligns with domain annotation-free settings which generally have a labeled holdout set \cite{Liu2021JustTT,Giannone_Havrylov_Massiah_Yilmaz_Jiao_2022,labonte23towards}.
\section{Theoretical Guarantees}
We first consider the general setting (multi-class, multi-domain) and show that the models learned after downsampling ($\theta^*_\text{DS}$) and upweighting ($\theta^*_\text{UW}$) are the same in the population setting.
\begin{proposition}
    For any given $P_{XYD}$ and loss $\ell$, the objectives in \eqref{eq:gen-opt}, when modified appropriately for DS and UW, are the same. Therefore, if a minimizer exists for one it also exists for the other, i.e., $\theta^*_\text{DS}=\theta^*_\text{UW}$.
    \label{prop:ds-uw-eq}
\end{proposition}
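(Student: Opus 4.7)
The plan is to show that both modified objectives, when written as sums over groups, reduce to the same uniformly weighted per-group risk, so that the $\argmin$ (if it exists) must coincide.

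First I would expand the expectation in \eqref{eq:gen-opt} by conditioning on the group label $(Y,D)$:
\begin{equation*}
\mathbb{E}_{P_{XYD}}[c(Y,D)\ell(f_\theta(X),Y)] = \sum_{(y,d)\in\mathcal{Y}\times\mathcal{D}} \pi^{(y,d)} \, c(y,d)\, \mathbb{E}_{P_{X|Y=y,D=d}}[\ell(f_\theta(X),y)].
\end{equation*}
This decomposition is the common starting point for both methods, and the two augmentations correspond to two different choices of how the pair $(\pi^{(y,d)}, c(y,d))$ enters the sum.

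Next I would plug in the UW cost $c_{\text{UW}}(y,d) = 1/(g\pi^{(y,d)})$ from the UW definition. The prior $\pi^{(y,d)}$ cancels against the denominator, yielding
\begin{equation*}
\mathbb{E}_{P_{XYD}}[c_{\text{UW}}(Y,D)\ell(f_\theta(X),Y)] = \frac{1}{g}\sum_{(y,d)} \mathbb{E}_{P_{X|Y=y,D=d}}[\ell(f_\theta(X),y)].
\end{equation*}
For DS, the cost is trivial ($c_{\text{DS}}\equiv 1$), but the data distribution is replaced by $\tilde P_{XYD}$ in which, by construction, each group is drawn with probability $1/g$ while the class-conditional marginals $P_{X|Y=y,D=d}$ are preserved. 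In the population limit where $n_{\min}\approx \pi_{\min} n$, sampling $n_{\min}$ points per group exactly realizes these uniform group priors. Substituting $\tilde\pi^{(y,d)} = 1/g$ into the decomposition gives the identical expression obtained for UW.

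Since the two population objectives are literally the same function of $\theta$, their argmin sets coincide; in particular if a minimizer exists for one, the same point is a minimizer for the other, giving $\theta^*_{\text{DS}} = \theta^*_{\text{UW}}$. The only subtle point — which I would flag as the main technical care — is the passage from the empirical DS procedure (actually discarding samples) to its population counterpart (reweighting groups to uniform priors); this is where the assumption $n\to\infty$ with $n_{\min}\approx \pi_{\min}n$ is essential, and it is the step that prevents an analogous statement from holding exactly at finite sample size.
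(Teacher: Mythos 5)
Your proposal is correct and follows essentially the same route as the paper's own argument: decompose the expected cost-weighted loss over groups, observe that the upweighting factor $1/(g\pi^{(y,d)})$ cancels the group priors, and identify the resulting uniform-prior objective with the population version of downsampling, so the two objectives (and hence their minimizers) coincide. Your explicit remark about the passage from empirical downsampling to its population counterpart is a reasonable clarification of the same point the paper handles implicitly by working in the population setting.
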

\begin{proof}[Proof]\let\qed\relax The key idea of the proof is that the upweighting factor is proportional to the inverse of the priors on each group. Thus, for any $f_\theta$ and $P_{XYD}$, the expected loss is
\begin{equation}
    \mathbb{E}_{X,Y,D}[\ell(f_\theta(X),Y)c(Y,D)]
\end{equation}
and can be decomposed into an expectation over groups. For such a decomposition, the priors from the expected loss cancel with the upweighting factor and we recover the downsampled problem with uniform priors. 
\end{proof}
We now consider the setting where each group, given by a tuple of binary domain ($\mathcal{D}=\{S, T\}$) and binary class labels ($\mathcal{Y}=\{0,1\}$), is symmetrically distributed with different means but equal covariance.
We additionally impose a structural condition studied in \citet{Yao_Wang_Li_Zhang_Liang_Zou_Finn_2022} which allows us to theoretically analyze the weights and performance of least-squares-type algorithms, learning a simplified linear classifier with squared loss.

\begin{figure}
    \centering
    \begin{subfigure}[b]{0.45\textwidth}
        \centering
        \includegraphics[width=\textwidth]{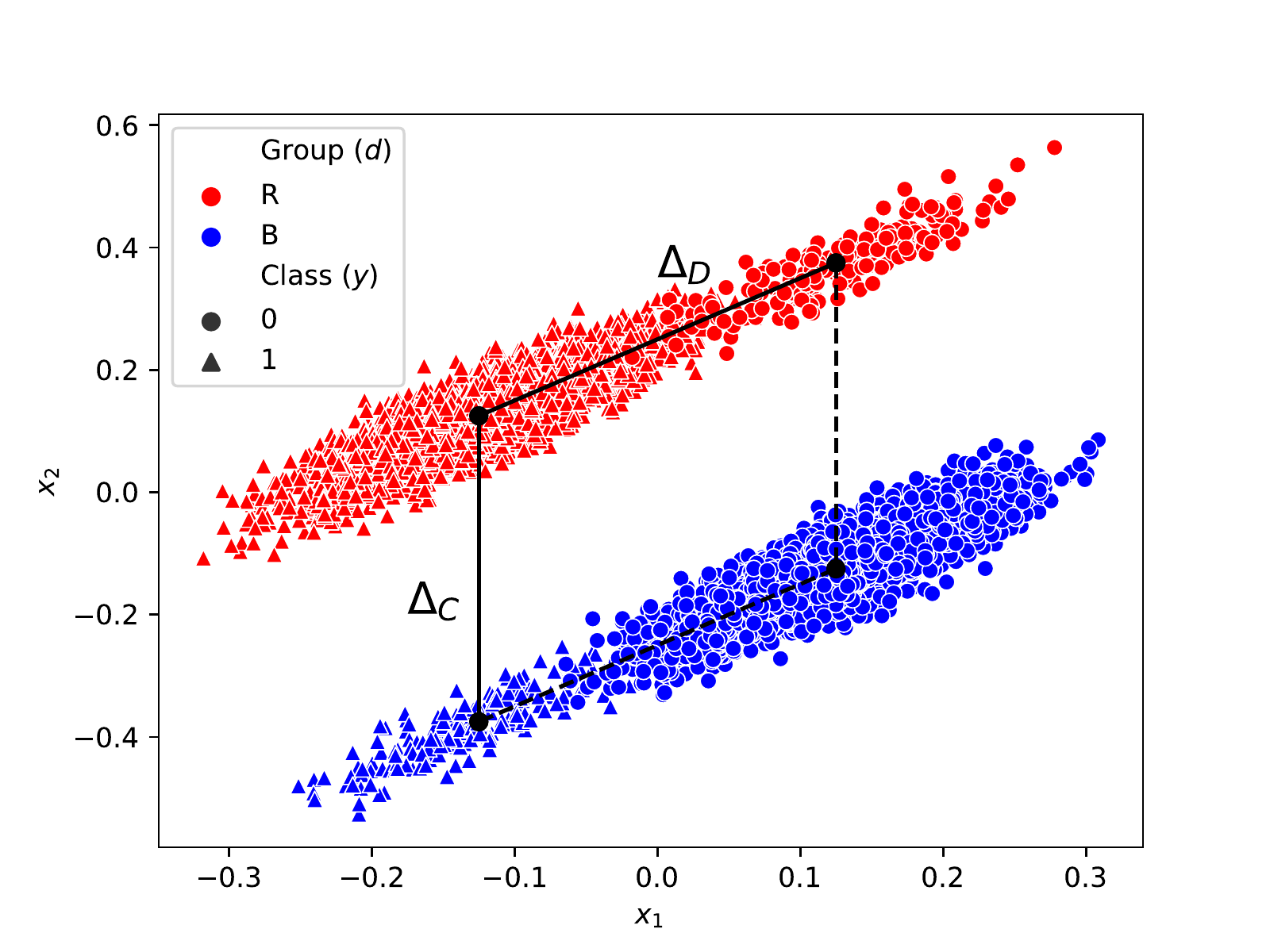}
        \label{fig:structure}
    \end{subfigure}
    \hfill
    \begin{subfigure}[b]{0.45\textwidth}
        \centering
        \includegraphics[width=\textwidth]{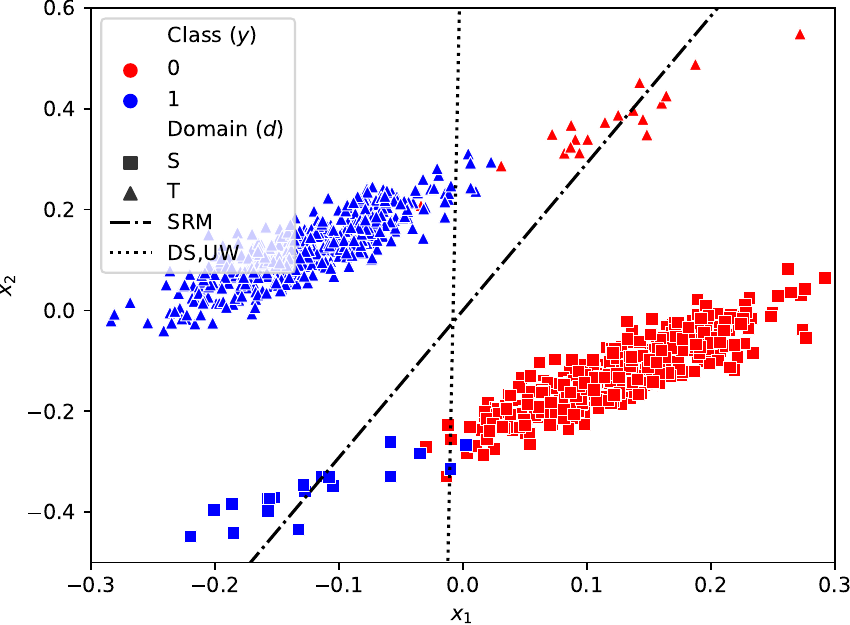}
        \label{fig:opt_lines}
    \end{subfigure}
    \caption{Sample drawn from a distribution satisfying \cref{as:equal_priors,as:latent_normal,as:mean_difference,as:orthogonality}. $\Delta_C$ and $\Delta_D$ are shown as line segments between means. Additionally the classifiers learned by SRM, DS, and UW are shown. It is clear that DS and UW learn the separator which is unaffected by spurious correlation.}
    \label{fig:data}
\end{figure}
\begin{definition}\cite{}
    A probability distribution $Q$ on $\mathbb R$ is \emph{symmetric} around its mean $\mu$ if its CDF $\Phi$ satisfies
    \[1-\Phi(x-\mu) = \Phi(-x+\mu), \quad \forall x \in \mathbb R.\]
\end{definition}
\begin{assumption}
    \label{as:latent_normal} Conditioned on $Y$ and $D$, $X \in \mathcal{X}$ is distributed according to  $P_{X|y,d}$
    for $(y,d) \in \mathcal{Y}\times\mathcal{D}$, with mean $\mu^{(y,d)}\coloneqq \mathbb E[X|Y=y,D=d] \in \mathbb R^m$ and positive semidefinite covariance $\Sigma \in \mathbb R^{m\times m}$ such that the distribution of $a^TX | (Y=y, D=d)$ is \emph{symmetric} for any $a\in\mathbb R^m$ and its CDF $\Phi_{a^TX|y,d}: \mathbb{R}\rightarrow [0,1]$ is strictly increasing. 
    Additionally, we place priors $\pi^{(y,d)}$, $(y,d) \in \mathcal{Y}\times\mathcal{D}$, on each group and priors $\pi^{(y)}\coloneqq P(Y=y)$, $y \in \mathcal{Y}$, on each class.  
\end{assumption}
Note that \cref{as:latent_normal} holds for many conditional distributions on $X$ including
spherically symmetric distributions (see \citet[Theorem 2.4]{fang2018symmetric} along with the fact that spherically symmetric distributions have symmetric marginals),
Guassian distributions, 
and distributions with sub-independent and symmetric marginals. For ease of intuition, we focus mainly on Guassian groups.
Following \citet{Chaudhuri_Ahuja_Arjovsky_Lopez-Paz_2022} we deal only with distributions which induce strictly increasing CDFs so that there is an explicit tradeoff between majority and minority group performance.
\begin{assumption}
    \label{as:equal_priors} The minority groups have equal priors, i.e., for $\pi_0\le1/4$,
    \begin{align*}
        \pi^{(0,R)} = \pi^{(1,B)} = \pi_0  \text{ and } 
        \pi^{(1,R)} = \pi^{(0,B)} = 1/2-\pi_0.
    \end{align*}
    Also, the class priors are equal, i.e., $\pi^{(0)} = \pi^{(1)} = 1/2$.
\end{assumption}
\begin{assumption}
    \label{as:mean_difference} The difference in means between classes in a domain $\Delta_D \coloneqq \mu^{(1,d)} - \mu^{(0,d)}$ is constant for $d\in\mathcal{D}$.
\end{assumption}
     \cref{as:mean_difference} also implies that the difference in means between domains within the same class $\Delta_C \coloneqq \mu^{(y,B)} - \mu^{(y,R)}$ is also constant for each $y\in\mathcal{Y}$. We see this by noting that each group mean makes up the vertex of a parallelogram. This is illustrated in \cref{fig:data}, where $\Delta_D$ and $\Delta_C$ are shown on data samples drawn from a distribution satisfying \cref{as:equal_priors,as:latent_normal,as:mean_difference}.

\begin{assumption}
    $\Delta_D$ and $\Delta_C$ are orthogonal with respect to $\Sigma^{-1}$, i.e., $\Delta_C^T \Sigma^{-1} \Delta_D = 0$.
    \label{as:orthogonality}
\end{assumption}

We see an example of a dataset drawn from a distribution satisfying \cref{as:equal_priors,as:latent_normal,as:mean_difference,as:orthogonality} in \cref{fig:data}. The data is generated as a mixutre of 2D Gaussians with parameters,
\begin{align*}
    \Delta_C &= \begin{pmatrix}
        0 & -0.5
    \end{pmatrix}^T &\quad
    \Delta_D &= \begin{pmatrix}
        -0.25 & -0.25
    \end{pmatrix}^T \\
    \Sigma &= \begin{pmatrix}
        .003 & .003 \\
        .003 & .004 \\
    \end{pmatrix} &\quad
    \pi_0 &= \frac{1}{50}.
\end{align*}
While this is a simplified view of binary class, binary domain latent groups, these tractable assumptions allow us to make theoretical guarantees about the performance of downsampling and upweighting under noise. We see that the general trends observed in this simplified setting hold in large publicly available datasets in \cref{sec:experiments}.

We now show that upweighting and downsampling achieve identical worst-group accuracy in the population setting (i.e., infinite samples) and both degrade with SLN in the domain annotation while ERM is unaffected by domain noise.
\begin{theorem}
    Consider the model of latent groups satisfying \cref{as:equal_priors,as:latent_normal,as:mean_difference,as:orthogonality} under symmetric domain label noise with parameter $p$. Let $f_\theta(x)=w^Tx+b$ and $\hat{Y} = \mathbbm{1}\{f_\theta(x) > 1/2\}$. In this setting, let  $\theta^{(p)}_\text{UW}$ and $\theta^{(p)}_\text{DS}$ denote the solution to \eqref{eq:gen-opt} under squared loss for UW and DS, respectively. For any $\pi_0 \in (0,1/4]$, the WGA of both augmentation approaches are equal and degrade smoothly in $p\in[0,1/2]$ to the baseline WGA of \eqref{eq:gen-opt} with no augmentation (with optimal parameter $\theta_\text{ERM}$).  That is, 
    \begin{align*}
        WGA(\theta_\text{ERM}) \le WGA(\theta^{(p)}_\text{DS}) = WGA(\theta^{(p)}_\text{UW})
    \end{align*}
    with equality at $p=1/2$ or $\pi_0 = 1/4$ .
    \label{thm:noisy_wga}
\end{theorem}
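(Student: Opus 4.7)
The plan is to start by invoking \cref{prop:ds-uw-eq} to reduce the analysis to UW alone, since the equivalence between DS and UW holds whenever a weighted population objective is well-defined, including when the weights are computed from noisy priors. Under symmetric domain noise with parameter $p$, the observed group priors become
\[
\tilde\pi_0 \coloneqq \tilde\pi^{(0,R)} = \tilde\pi^{(1,B)} = \tfrac{p}{2} + (1-2p)\pi_0, \qquad \tilde\pi^{(1,R)} = \tilde\pi^{(0,B)} = \tfrac{1}{2} - \tilde\pi_0,
\]
so UW assigns weight $\tilde c(y,d) = 1/(g\,\tilde\pi^{(y,d)})$ to each observed group. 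Two boundary observations drop out immediately: both $p=1/2$ and $\pi_0=1/4$ force $\tilde\pi_0 = 1/4$, in which case $\tilde c \equiv 1$ and the UW (and hence DS) objective coincides pointwise with the ERM objective. This accounts for the equality cases in the theorem.

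For general $p\in[0,1/2]$, I would derive a closed-form least-squares minimizer in the population setting. Since domain-label noise only re-weights but does not alter the true class-domain conditional distribution of $X$, the weighted squared loss decomposes as
\[
\mathcal L(w,b) = \sum_{y,d} \frac{\pi^{(y,d)}}{g\,\tilde\pi^{(y,d)}}\;\mathbb E\bigl[(w^TX+b-y)^2 \,\big|\, Y=y,\, D=d\bigr].
\]
Under \cref{as:latent_normal,as:mean_difference}, the conditional covariance is $\Sigma$ in every group and the four means $\mu^{(y,d)}$ form a parallelogram with edges $\Delta_D$ and $\Delta_C$. Differentiating and applying the standard least-squares identity reduces the stationarity conditions to a $\Sigma$-weighted linear system whose right-hand side lies in $\mathrm{span}\{\Delta_D,\Delta_C\}$. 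Using the orthogonality $\Delta_C^T\Sigma^{-1}\Delta_D=0$ of \cref{as:orthogonality}, I expect the optimum to decouple cleanly as
\[
w^\star(p) = \alpha(p)\,\Sigma^{-1}\Delta_D + \beta(p)\,\Sigma^{-1}\Delta_C,
\]
where bookkeeping the weighted means will show $\beta(p)$ proportional to $(1/4 - \tilde\pi_0)$ and $\alpha(p)$ depending only on $\|\Delta_D\|_{\Sigma^{-1}}$. In particular, $\beta(1/2) = \beta_{\mathrm{ERM}}$ and $\beta(0) \ne \beta_{\mathrm{ERM}}$ whenever $\pi_0 \ne 1/4$, which is exactly the monotone interpolation we want.

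To pass from weights to WGA, I would invoke the symmetric strictly increasing CDF in \cref{as:latent_normal}: each per-group accuracy is a strictly monotone function of the signed margin $(w^T\mu^{(y,d)}+b-1/2)/\sqrt{w^T\Sigma w}$. By the parallelogram geometry and orthogonality, the minority-group margins are controlled exclusively by the $\Sigma^{-1}\Delta_C$ component, so $\mathrm{WGA}(\theta^{(p)}_{\mathrm{UW}})$ is a strictly decreasing function of $|\beta(p)|$. Since $p\mapsto\tilde\pi_0(p)$ is affine and monotonically pushes $\tilde\pi_0$ from $\pi_0$ to $1/4$ as $p$ goes from $0$ to $1/2$, $|\beta(p)|$ shrinks smoothly toward its ERM value, and so does the WGA, yielding the claimed inequality with equality at $p=1/2$ or $\pi_0=1/4$.

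The main obstacle I anticipate is the bookkeeping in the closed-form step: one must carefully track the four weighted conditional means and use both \cref{as:mean_difference} (parallelogram) and \cref{as:orthogonality} (decoupling) to prove that the $\Sigma^{-1}\Delta_C$ coefficient depends only on $\tilde\pi_0$ and vanishes exactly at $\tilde\pi_0=1/4$. A secondary concern is that $\mathrm{WGA}$, being a minimum, is a priori only continuous; however, \cref{as:equal_priors} together with the symmetry of the parallelogram makes the two minority groups attain identical accuracies for every $w$ of the stated form, so the minimum is jointly attained and inherits the smooth monotone dependence on $|\beta(p)|$.
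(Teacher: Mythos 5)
Your endpoint analysis is fine, but there is a genuine gap at the heart of the noise analysis: your weighted-loss decomposition
$\mathcal L(w,b)=\sum_{y,d}\frac{\pi^{(y,d)}}{g\,\tilde\pi^{(y,d)}}\,\mathbb E[(w^TX+b-y)^2\mid Y=y,D=d]$
silently assumes that each sample is weighted according to its \emph{true} group (with noisy priors only entering the denominator). In the noise model of the paper, UW assigns the weight based on the \emph{observed} noisy domain $\tilde D$, so a true minority sample whose domain was flipped receives the small majority weight and vice versa. Decomposing $\mathbb E_{X,Y,\tilde D}[c(Y,\tilde D)\ell]$ over true groups and the flip event gives an effective weight $(1-p)c(y,d)+p\,c(y,\bar d)$ on true group $(y,d)$, i.e.\ a $p$-mixture of the minority and majority weights. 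This is exactly what the paper captures (via the DS route) in \cref{lemma:ds-prior-noise}, where the effective clean prior is $\pi_\text{DS}^{(p)}=\frac{(1-p)\pi_0}{4\pi^{(p)}_0}+\frac{p\pi_0}{4(1/2-\pi^{(p)}_0)}$ as in \eqref{eq:ds_prior}; your decomposition would instead yield the single term $\frac{\pi_0}{4\pi_0^{(p)}}$. The two happen to agree at $p=0$ and $p=1/2$ (which is why your equality cases come out right), but for $0<p<1/2$ you would be computing the minimizer and WGA of a different objective than the one DS/UW actually optimize, so the claimed smooth degradation would be proved for the wrong family of classifiers.

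Two smaller points. First, the coefficient on $\Sigma^{-1}\Delta_C$ in the optimal weights is not simply proportional to $1/4-\tilde\pi_0$: carrying out the Sherman--Morrison step as in \cref{lemma:erm-wga} gives $\tilde c_{\pi}=(1-4\pi)/(1+2\pi(1-2\pi)\|\Delta_C\|^2)$ with a $\pi$-dependent denominator; it vanishes at $\pi=1/4$ but the monotone dependence on $p$ still has to be checked by differentiating the full expression (the paper does this explicitly, since both the numerator $\|\Delta_D\|^2-\tilde c\,\|\Delta_C\|^2$ and the normalization $\sqrt{w^T\Sigma w}$ depend on $\tilde c$, so the minority margin is not controlled by the $\Delta_C$ component alone). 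Second, your reduction via \cref{prop:ds-uw-eq} and the identification of the minority accuracy as the minimum (using the symmetric, strictly increasing CDF and \cref{as:equal_priors}) match the paper's strategy, which proceeds through ERM invariance to noise, a closed-form ERM WGA, the effective-prior lemma, and a derivative argument; repairing the mixture-weight step above would bring your argument in line with that proof.
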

\begin{proof}[Proof Sketch] \let\qed\relax
The proof of \cref{thm:noisy_wga} is presented in \cref{sec:proof} and involves showing that the WGA for downsampling under domain label noise is the same as that for ERM (which is noise agnostic) but with a different prior dependent on $p$. Our proof refines the analysis in \citet{Yao_Wang_Li_Zhang_Liang_Zou_Finn_2022} and involves the noisy prior.    
\end{proof}
Fundamentally, this result can be seen as an effect of the domain noise on the perceived (noisy) priors  ${\pi}_0^{(p)}$ of the minority groups, which can be derived as
\begin{align}
    \pi_0^{(p)} &\coloneqq (1-p) \pi_0  + p\left(1/2 - \pi_0\right).
    \label{eq:noisy_prior}
\end{align} 
As $p\rightarrow 1/2$ in \eqref{eq:noisy_prior}, the  minority prior \emph{perceived} by both UW and DS tends to a balanced prior across groups. 
A UW augmentation thus would weight in inverse proportion to the corresponding noisy (and not the true) prior for each group. The \emph{true prior} of the minority group after DS can be derived as (see \cref{sec:proof}) 
\begin{equation}
    \pi_\text{DS}^{(p)} \coloneqq \frac{(1-p)\pi_0}{4\pi^{(p)}_0} + \frac{p \pi_0}{4(1/2 - \pi^{(p)}_0)}.
    \label{eq:ds_prior}
\end{equation}
Thus, with noisy domain labels, instead of the desired balanced group priors after downsampling, from \eqref{eq:ds_prior}, DS results in a true minority prior that decreases from $1/4$ to  $\pi_0$. 
Thus, noise in domain labels drives inaction from both augmented methods as $p$ increases. 

 We see the effect of noise in a numerical example in \cref{fig:gaussian_noise}, noting that while the theorem is in the population setting, our numerical example uses finite sample methods with $n=10,000$. 
 This shows that the performance of each method quickly degrades even in this simple setting. The ERM performance, however, remains constant because ERM does not use domain information. This motivates us to examine a robust method that does not use domain information at all but is more effective than ERM in terms of WGA.
\begin{figure}[t]
    \centering    \includegraphics[width=0.45\linewidth,trim={0 0 0 0},clip]{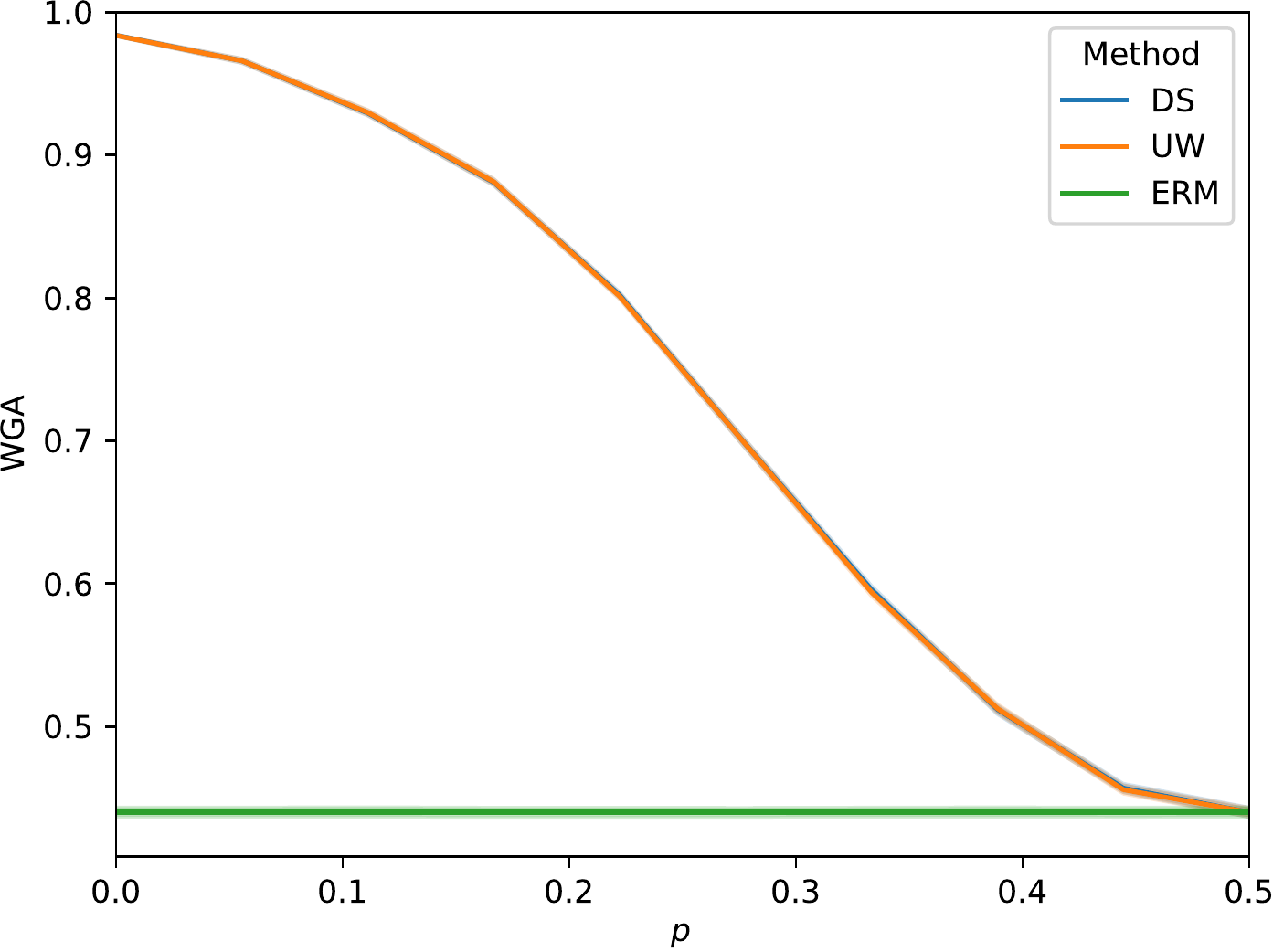}
    \caption{For latent Gaussian data, the WGA of DS and UW (seen as overlapping) decreases as the noise prevalence $p$ increases to $1/2$. At the extreme point, the WGA of ERM is recovered.}
    \label{fig:gaussian_noise}
\end{figure}
\section{Regularized Annotation of Domains}
When domain annotations are unavailable (or noisy), 
\citet{labonte23towards} use implicitly regularized models trained on the imbalanced retraining data to annotate the data. We take a similar approach but explicitly regularize our pseudo-annotation model with an $\ell_1$ penalty. 
The intuition behind using an $\ell_1$ penalty is similar to that of DFR \citep{kirichenko23last}. Where \citet{kirichenko23last} argue that an $\ell_1$ penalty helps to select only ``core features" when trained on a group-balanced dataset, we argue that the same penalty with a large multiplicative factor will help to select spuriously correlated features when trained on the original imbalanced data. If we can successfully learn the spuriously correlated features, those samples which are correctly classified can be viewed as majority samples (those for which the spurious correlation holds) and those which are misclassified can be seen as minority samples. 

We introduce RAD (Regularized Annotation of Domains) which uses a highly $\ell_1$ regularized linear model to pseudo-annotate domain information by quantizing true domains to binary majority and minority annotations. Pseudocode for this algorithm is presented in \cref{alg:rad_alg}. We see that a strongly regularized classified is first learned, $f_\text{ID}$, which allows us to identify minority points through misclassification. We use these annotations, $\tilde d$, to upweight examples in the next step

\textbf{RAD-UW} (\cref{alg:rad_uw}) involves learning sequentially: (i) a pseudo-annotation RAD model (outlined in \cref{alg:rad_alg}), and (ii) a regularized linear retraining model, which is trained on all examples while upweighting the pseudo-annotated minority examples output by RAD, as the solution $\hat\theta_\text{RAD-UW}^*$ optimizing the empirical version of \eqref{eq:gen-opt} using the logistic loss $\ell_L$ with $\ell_1$ regularization as: 
\begin{equation}
    \hat\theta_\text{RAD-UW}^* =\argmin_\theta \frac{1}{n} \sum_{i=1}^n c(\tilde d_i)\ell_L(f_\theta(x_i), y_i) + \lambda \Vert w \Vert_1.
    \label{eq:rad-uw-weights}
\end{equation}
Here again, the $\ell_1$ regularization comes into play. While the regularization in the first step encouraged the biased model to misclassify minority points, here the same regularization encourages learning features which do well both for the majority and (upweighted) minority. This discourages learning spurious features which may be present (and helpful) in the majorities.

\begin{minipage}[t]{0.48\textwidth}
\begin{algorithm}[H]
   \caption{Regularized Annotation of Domains (RAD)}
   \label{alg:rad_alg}
\begin{algorithmic}
   \STATE {\bfseries Input:} data $D = (x,y)$
   \STATE Train classifier $f_\text{ID}$ on $D$ with $\ell_1$ factor $\lambda_\text{ID}$
   \FOR{$(x_i, y_i) \in D$}
   \IF{$f_\text{ID}(x_i) \ne y_i$}
   \STATE $\tilde d_i \leftarrow 1$
   \ELSE
   \STATE $\tilde d_i \leftarrow 0$
   \ENDIF
   \ENDFOR
   \STATE {\bfseries Return:} $(x,y,\tilde d)$, the pseudo-annotated data
\end{algorithmic}
\end{algorithm}
\end{minipage}
\hfill
\begin{minipage}[t]{0.48\textwidth}
\begin{algorithm}[H]
   \caption{RAD-UW}
   \label{alg:rad_uw}
\begin{algorithmic}
   \STATE {\bfseries Input:} data $D = (x,y)$
   \STATE $(x,y, \tilde d) \leftarrow \text{RAD}(x,y)$ 
   \STATE Train classifier $f_\text{retrain}$ on $(x,y)$ while upweighting examples where $\tilde d = 1$ by factor $c$
   \STATE {\bfseries Return:} $\hat y = f_\text{retrain}(x)$, the estimated class label of $x$
\end{algorithmic}
\end{algorithm}
    
\end{minipage}
\section{Empirical Results}
We present worst-group accuracies for several representative methods across four large publicly available datasets.
Note that for all datasets, we use the training split to train the embedding model. Following prior work \cite{kirichenko23last,labonte23towards}, we use half of the validation as retraining data, i.e., training data for only the last layer, and half as a clean holdout. 
\subsection{Datasets}
\textbf{CMNIST} \cite{arjovsky2019invariant} is a variant of the MNIST handwritten digit dataset in which digits 0-4 are labeled $y=0$ and digits 5-9 are labeled $y=1$. Further, 90\% of digits labeled $y=0$ are colored green and 10\% are colored red. The reverse is true for those labeled $y=1$. Thus, we can view color as a domain and we see that the color of the digit and its label are correlated.

\textbf{CelebA} \cite{liu2015faceattributes} is a dataset of celebrity faces. For this data, we predict hair color as either blonde ($y=1$) or non-blonde ($y=0$) and use  gender, either male ($d=1$) or female ($d=0$), as the domain label. There is a natural correlation in the dataset between hair color and gender because of the prevalence of blonde female celebrities. 

\textbf{Waterbirds} \cite{sagawa2019distributionally} is a semi-synthetic dataset which places images of land birds ($y=1$) or sea birds ($y=0$) on  land ($d=1$) or sea ($d=0$) backgrounds. There is a correlation between background and the type of bird in the training data  but this correlation is removed in the validation data. 

\textbf{MultiNLI} \cite{williams2017broad} is a text corpus dataset widely used in natural language inference tasks. For our setup, we use MultiNLI as first introduced in \cite{oren2019distributionally}. Given two sentences, a premise and a hypothesis, our task is to predict whether the hypothesis is either entailed by, contradicted by, or neutral with the premise. There is a spurious correlation between there being a contradiction between the hypothesis and the premise and the presence of a negation word (no, never, etc.) in the hypothesis.

\textbf{CivilComments} \cite{borkan2019nuanced} is a text corpus dataset of public comments on news websites. Comments are labeled either as toxic ($y=1$) or civil ($y=0$) and the spurious attribute is the presence ($d=1$) or absence ($d=0$) of a minority identifier (e.g. LGBTQ, race, gender). There is a strong class imbalance (most comments are civil), though the domain imbalance is modest. 

\subsection{Importance of $\ell_1$ Regularization}
We emphasize that a strong $\ell_1$ regularizer is critical to the success of our method through the intuition of DFR, but it remains to be seen how this bears out in practice. To this end, we examine combinations of both $\ell_1$ and the common $\ell_2$ penalties. We see in \cref{tab:reg_ablation} that the retraining regularizer has only a small impact on the overall performance, while the identification regularization has more impact. We note that for CelebA, a real-world image dataset of faces, $\ell_1$ regularization shows dramatic gains, while the synthetic Waterbirds and CMNIST datasets are much closer. 

\begin{table}
\centering
\caption{}
\label{tab:reg_ablation}
\begin{tabular}{cccccc}
\hline
\multicolumn{2}{c}{Regularizer} & \multicolumn{4}{c}{Dataset WGA} \\ \hline
ID           & Retrain          & CMNIST    & WB       & CelebA  & Civil Comments \\ \hline
$\ell_1$           & $\ell_1$               & $\mathbf{94.06}$   & $\mathbf{92.52}$     & $\mathbf{83.51}$  & $\mathbf{81.57}$   \\
$\ell_1$           & $\ell_2$               & 93.89      & 89.53       & 82.59  & 74.97  \\
$\ell_2$           & $\ell_1$               & 94.04     & 92.18    & 68.39   & 51.17 \\
$\ell_2$           & $\ell_2$               & 93.86     & 90.98    & 64.28  & 53.00 \\ \hline
\end{tabular}
\end{table}

\subsection{Main Results}

We present results for both group- and class-only-dependent methods using both downsampling  and upweighting. Additionally, we present results for vanilla LLR, which performs no data or loss augmentation step before retraining. Finally, we present results for two-stage last layer methods, namely misclassification SELF (M-SELF) and RAD with upweighting (RAD-UW). Every retraining method solves the following empirical optimization problem:
\begin{equation}
    \hat{\theta}^* = \argmin_{\theta} \frac{1}{n}\sum_{i=1}^n c(d_i, y_i) \ell(f_{\theta}(x_i), y_i) + \lambda \Vert w \Vert_1,
\end{equation}
which can be seen as a finite sample version of \eqref{eq:gen-opt} with an $\ell_1$ regularization. For $\ell$, we use the logistic loss.

The group-dependent downsampling procedure we have adopted is the same as that of DFR, introduced in \citet{kirichenko23last}, but the DFR methodology averages the learned model over 10 training runs. This could help to reduce the variance, but we do not implement this so as to directly compare different data augmentation methods since most others do not so either. More generally, we could apply model averaging 
to any of the data augmentation methods. We explore the effect of model averaging in \cref{appendix:averaging}.

We use the logistic regression implementation from the \verb|scikit-learn| \cite{scikit-learn} package for the retraining step for all presented methods. For all final retraining steps (including LLR) an $\ell_1$ regularization is added. The strength of the regularization $\lambda$ is a hyperparameter selected using the clean holdout. The upweighting factor for group annotation-inclusive UW methods is given by the inverse of the perceived prevalence for each group or class.

For our RAD-UW method, we tune the regularization strength $\lambda_\text{ID}$ for the pseudo-annotation model via a grid search. We additionally tune the regularization strength $\lambda$ of the retraining model along with the upweighting factor $c$, which is left as a hyperparameter because the identification of domains by RAD is only binary and may not reflect the domains in the clean holdout data. The same upweighting factor is used for every pseudo-annotated minority sample.

Note that for all of the WGA results, we report the mean WGA over 10 independent noise seeds and 10 training runs for each noise seed. We also present the standard deviation around this mean, which will reflect the variance over training runs with the optimal hyperparameters. For misclassification SELF (M-SELF) \cite{labonte23towards}, we present both our own implementation of their algorithm and their results directly. It should be noted that they report the mean and standard deviation over only three independent runs and without noise; for the noisy setting, we observe that their method does not use domain annotations and so is unaffected by domain noise just as RAD-UW is. Our implementation utilizes the same embeddings as RAD-UW so we can ensure any differences are artifacts of the algorithms rather than the upstream model. The importance of the pretrained model is an important and underexplored factor in the literature which we leave as future work. 
\subsection{Worst-Group Accuracy under Noise}
\label{sec:experiments}
\begin{figure}
    \centering
    \begin{subfigure}[t]{0.47\textwidth}
    \centering
    \includegraphics[width=\textwidth]{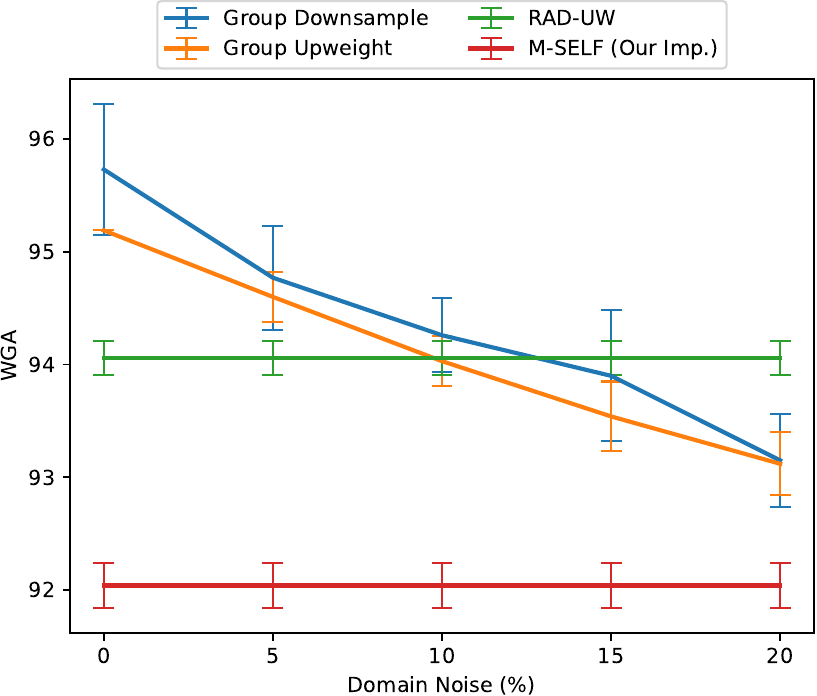}
    \caption{\textbf{CMNIST WGA under Domain Noise}}
    \label{fig:cmnist}
    \end{subfigure}
    \hfill
    \begin{subfigure}[t]{0.47\textwidth}
    \centering
    \includegraphics[width=\textwidth]{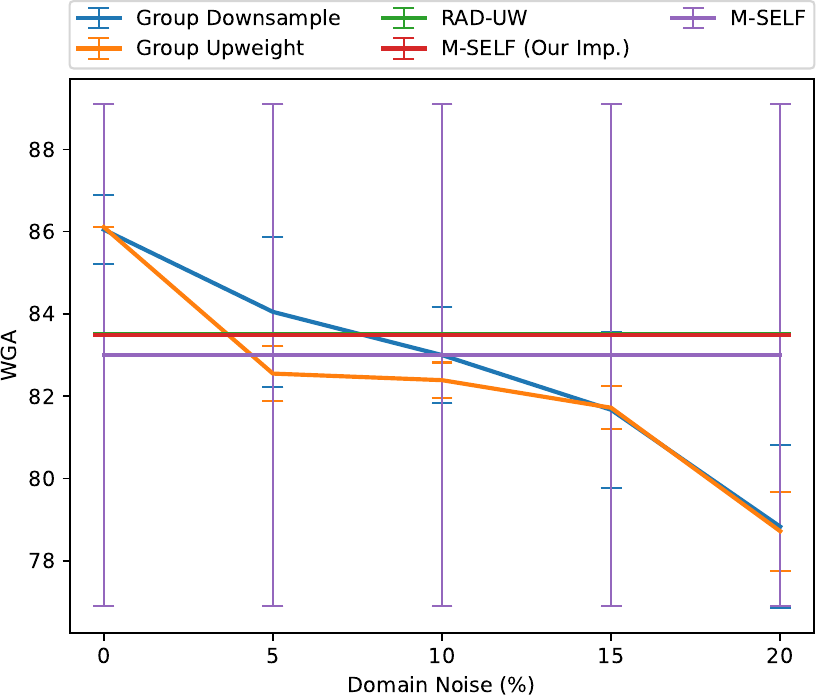}
    \caption{\textbf{CelebA WGA under Domain Noise}}
    \label{fig:celebA}
    \end{subfigure}
    \caption{Domain-dependent methods, group downsampling and upweighting, decline in performance as the domain noise increases. Meanwhile, RAD-UW remains consistent and matches or outperforms these methods starting at 10\% domain noise. The high variance of M-SELF in CelebA is concerning and is likely due to the class balancing performed by M-SELF.}
\end{figure}
We now detail the results on all datasets in \cref{fig:celebA,fig:civilcomments,fig:cmnist,fig:wb,fig:mnli}. See \cref{appendix:tables} for tables including class dependent versions of downsampling and upweighting.

For CMNIST, we present results in \cref{fig:cmnist,tab:CMNIST}; we see that each method that uses domain information achieves strong WGA at 0\% domain annotation noise, but for increasing noise, their performance drops noticeably. Additionally the methods which rely only on class labels without inferring domain membership are consistent across noise levels, but are outdone by their domain-dependent counterparts even at 20\% noise. Finally, RAD-UW achieves WGA comparable with the group-dependent methods, and surpasses their performance after 10\% noise in domain annotation. 

The results for CelebA are presented in \cref{fig:celebA,tab:CelebA}. We first note that LLR achieves significantly lower WGA than any other method owing to strong spurious correlation even in the retraining data. We also note that every domain-dependent method has a much more significant decline in performance for CelebA than for CMNIST. Here, RAD-UW outperforms the domain-dependent methods even at 10\% SLN. Not only this, but RAD beats the performance of SELF \cite{labonte23towards} for this dataset with much lower variance. 

\begin{figure}
    \centering
    \begin{subfigure}[t]{0.47\textwidth}
    \centering
    \includegraphics[width=\textwidth]{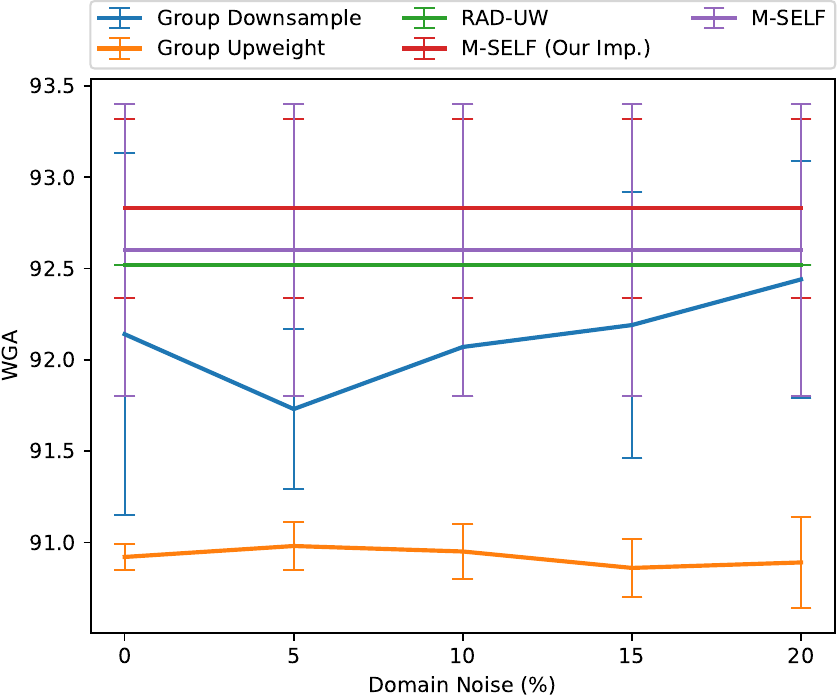}
    \caption{\textbf{Waterbirds WGA under Domain Noise}}
    \label{fig:wb}
    \end{subfigure}
    \hfill
    \begin{subfigure}[t]{0.47\textwidth}
    \centering
    \includegraphics[width=\textwidth]{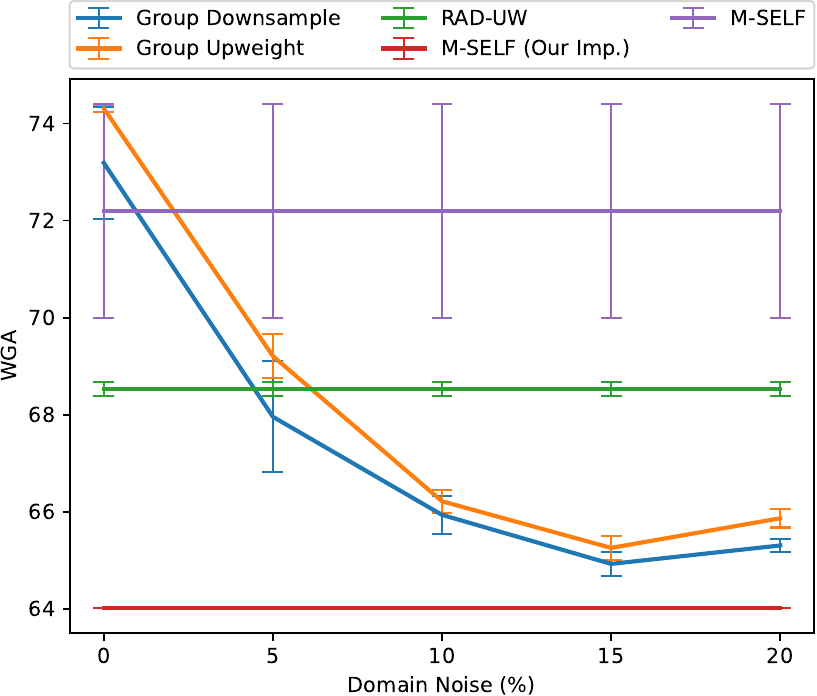}
    \caption{\textbf{MultiNLI WGA under Domain Noise}}
    \label{fig:mnli}
    \end{subfigure}
    \caption{We see that RAD-UW and M-SELF strongly outperform domain-dependent methods for most noise levels. Waterbirds is domain balanced from the beginning, so adding domain noise does not strongly bias the downsampling or upweighting classifiers.}
\end{figure}
For Waterbirds, we see in \cref{fig:wb,tab:wb} that noise, even significant amounts of it, has little effect on any of the
methods considered. This is because the existing splits have a domain-balanced validation, which we use here for retraining. Thus domain noise does not affect the group priors at all as argued analytically in \eqref{eq:noisy_prior} with $\pi_0 = 1/4$. Even so, we see that RAD-UW is competitive with existing methods, including the domain annotation-free methods of \citet{labonte23towards}.

For MultiNLI, we see in \cref{fig:mnli,tab:MNLI} that RAD-UW is able to match domain-dependent methods at 5\% noise and outperform them at 10\%, but is beaten by M-SELF's reported result. We note that we were not able to replicate their performance with our implementation. This may be because MultiNLI has weaker spurious correlation than the other datasets we examine. Also important to note is the effect of noise on MultiNLI. For 5\% and 10\% noise levels, we see a dramatic drop in WGA for domain-dependent methods, but this levels off as we recover the WGA of LLR. Thus even for 10\% noise, LLR is competitive with top domain-dependent methods.

Finally, for CivilComments, we that in \cref{fig:civilcomments,tab:civil_comments} that RAD-UW drastically outperforms M-SELF and is competitive with domain-dependent methods. Because CivilComments is generally domain balanced, domain noise does not have a large effect on the performance of GDS and GUW. On the other hand, the extreme class imbalance of the dataset leads to poor performance by M-SELF. The robustness of RAD-UW to extreme class imblance is an important factor in its success and likely due to the downsampling of competing methods. 

An interesting observation is that for almost all datasets and noise levels, the variance of group-dependent downsampling is consistently larger than upweighting at the same noise levels. This behavior is likely caused by two key issues: (i) DS reduces the dataset size, and (ii) it randomly subsamples the majority, each of which could increase the variance of the resulting classifier. DFR \cite{kirichenko23last} has attempted to remedy this issue by averaging the linear model that is learned over 10 different random downsamplings, but this increases the complexity of learning the final model. Our results raise questions on the prevalence of downsampling over the highly competitive upweighting method in the setting of WGA. 

A similar comment can be made with respect to RAD-UW and M-SELF. While M-SELF has strong performance on several datasets, it consistently has higher variance than RAD-UW and struggles on datasets with large class imbalance. This is likely due to the class balancing that \citet{labonte23towards} use in the retraining step. RAD-UW avoids this issue by using all of the available data and upweighting error points.
\begin{figure}
    \centering
    \begin{subfigure}[t]{0.47\textwidth}
        \centering
        \includegraphics[width=\textwidth]{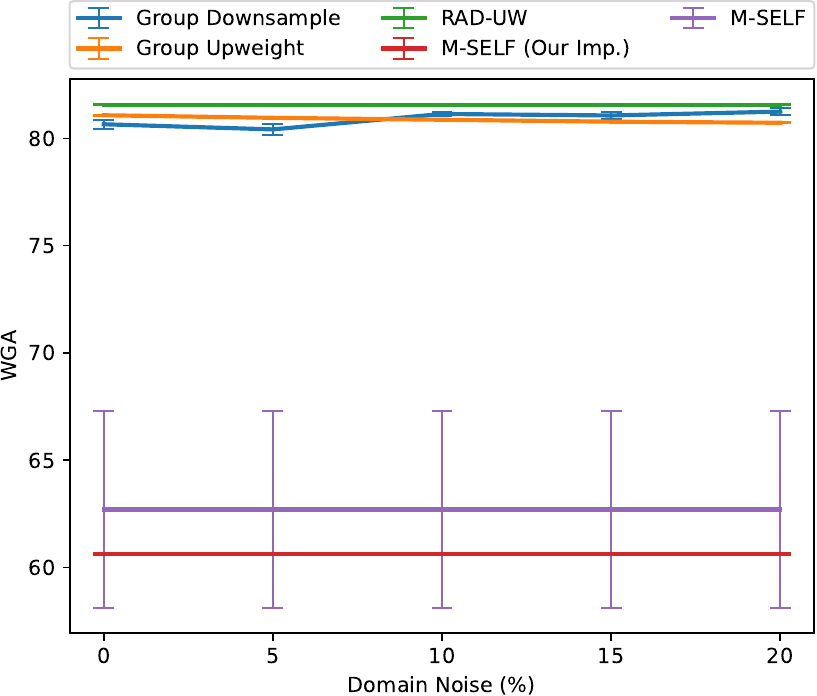}
        \caption{\textbf{CivilComments WGA under Domain Noise}. We see that RAD-UW strongly outperforms M-SELF on this datasets, likely because of the extreme class imbalance present in the retraining set. This, along with the class balancing performed by M-SELF, results in a very small error set to retrain on. Domain dependent methods are not strongly affected by domain noise because of the relative lack of domain imbalance.}
        \label{fig:civilcomments}
    \end{subfigure}
    \hfill
    \begin{subfigure}[t]{0.47\textwidth}
    \centering
    \includegraphics[width=\textwidth,trim={0 0 0 0},clip]{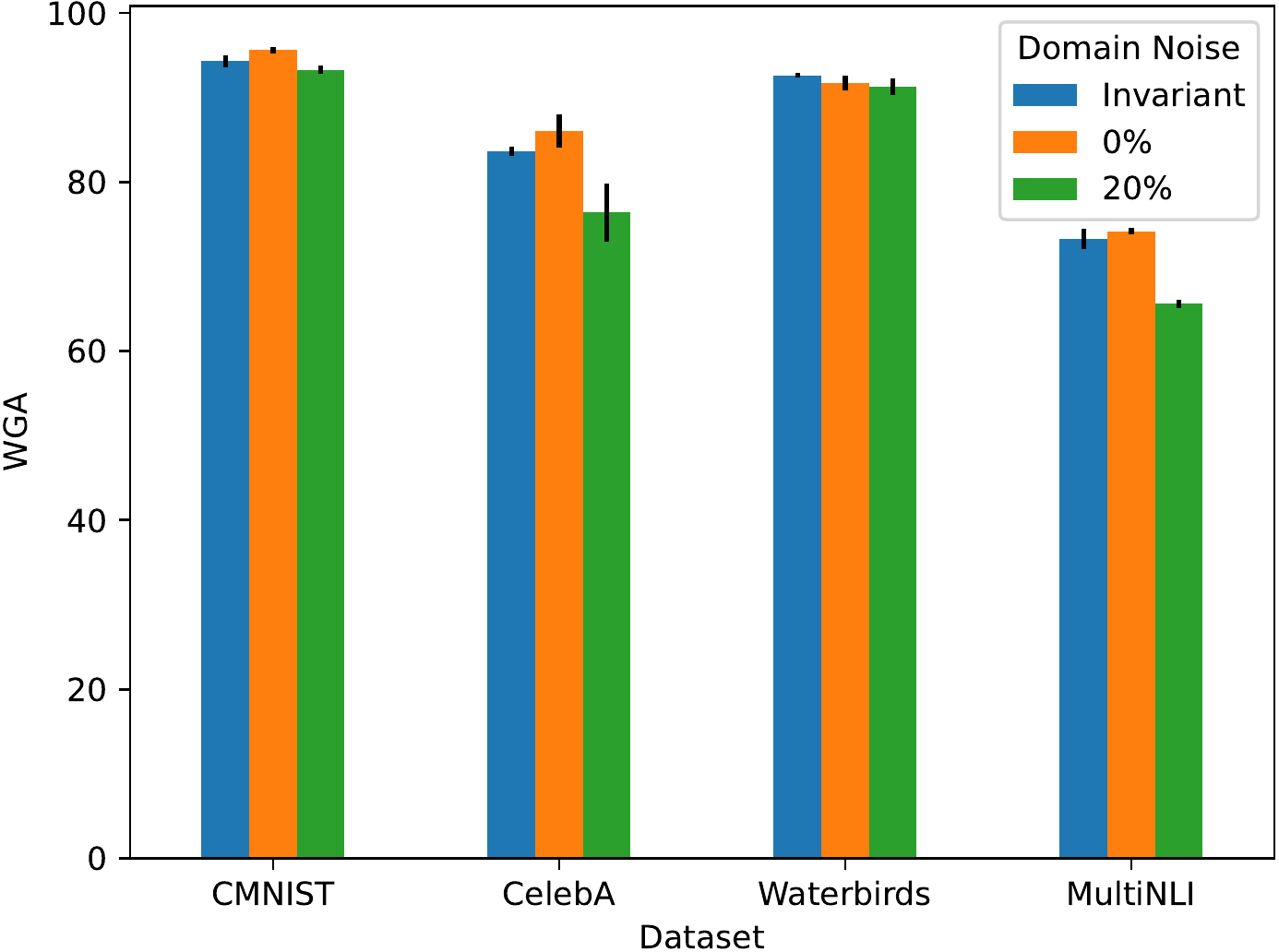}
    \caption{\textbf{The cost of ignoring domain annotations.} For each of the datasets, we compare the WGA of the best domain-dependent method at 0\% and 20\% domain noise with the best WGA amongst the methods which do not use domain information (labeled ``Invariant''). The cost of ignoring the domain information is very small for most datasets and in Waterbirds, utilizing domain information hurts performance somewhat. Thus, the opportunity cost of not using domain information is relatively small, while the cost of using a domain-dependent method with noise is quite large.}
    \label{fig:cost_of_ignoring}
    \end{subfigure}
\end{figure}

Finally, we consider the opportunity cost of ignoring domain annotations in the worst case, i.e., the domain labels we have are in fact noise-free but we still ignore them. In \cref{fig:cost_of_ignoring}, we see that the domain annotation-free methods cost very little in terms of lost WGA when training on cleanly annotated data. This loss is minimal in comparison to the cost of training using an annotation-dependent method when the domain information is noisy. Thus, if there are concerns about domain annotation noise in the training data, it is safest to use a domain annotation-free approach.

\section{Discussion}
We see in our experiments that these two archetypal data augmentation techniques, downsampling and upweighting, achieve very similar worst-group accuracy and degrade similarly with noise. This falls in line with our theoretical analysis, and suggests that simple symmetric mixture models for subpopulations can provide intuition for the performance of data-augmented last layer retraining methods on real datasets. 

Our experiments also indicate that downsampling induces a higher variance in WGA, especially in the noisy domain annotation setting. While this phenomenon is not captured in our population analysis, intuitively one should expect that having a smaller dataset  should increase the variance. Additionally, the models learned by group downsampling suffer from an additional dependence on the data that is selected in downsampling, which in turn increases the WGA variance.

Overall, we demonstrate that achieving SOTA worst-group accuracy is strongly dependent on the quality of the domain annotations on large publicly available datasets. Our experiments consistently show that in order to be robust to noise in the domain annotations, it is necessary to ignore them altogether. To this end, our novel domain annotation-free method, RAD-UW, assures  WGA values competitive with annotation-inclusive methods. RAD-UW does so by pseudo-annotating with a highly regularized model that allows discriminating between samples with spurious features and those without and retraining on the latter with upweighting. Our comparison of RAD-UW to two existing domain annotation-free methods and several domain annotation-dependent methods clearly highlights that it can outperform existing methods even for only 5\% domain label noise.

There is a significant breadth of future work available in this area. In the domain annotation-free setting, there is a gap in the literature regarding identification of subpopulations beyond the binary ``minority," or ``majority" groups. Identifying individual groups could help to increase the performance of retrained models and give better insight into which groups are negatively affected by vanilla LLR. Additionally, tuning hyperparameters without a clean holdout set remains an open question.

Beyond this, group noise could be driven by class label noise alongside domain annotation noise. The combination of these two types of noise would necessitate a robust loss function when training both the pseudo-annotation and retraining models, or a new approach entirely. We are optimistic that the approach presented here could be combined in a modular fashion with a robust loss to achieve robustness to more general group noise.
\section{Broader Impacts and Limitations}
We attempt to address the issue of subgroup fairness and robustness to subpopulation shift through a pseudo annotation strategy in the domain. This method could allow practitioners to more easily adapt existing models while assuring fair classification across subpopulations. However, because there is a not a formal guarantee of fairness, there is a possible societal consequence of practitioners assuming the retrained model is fair without verifying it.
Similar lack of guarantees is a downfall of most methods in the WGA literature. 

Additionally \citet{Oh2022ImprovingGR} has demonstrated that most two-stage methods, including the full-retraining precursors to RAD and SELF, can fair dramatically with small amounts of class label noise. Addressing this issue is left as future work.

Finally we note that achieving this fairness without domain-annotation presents an opportunity when domain labels may be private. This tradeoff between privacy and fairness is an important area of research, as discussed in \citet{king23privacy}. Perhaps there is a regime with limited private information, but enough to achieve some gains over domain-agnostic methods such as RAD and SELF. 
\bibliography{wga}
\bibliographystyle{icml2024}

\newpage
\appendix
\onecolumn

\section{Proof of \cref{thm:noisy_wga}}
\label{sec:proof}
Our proof can be outlined as involving five steps; these steps rely on \cref{prop:ds-uw-eq} and include three new lemmas. We enumerate the steps below:
\begin{enumerate}
    \item We first show in \cref{lemma:erm-noise-invariance} that ERM is agnostic to domain label noise.
    \item We next show in \cref{lemma:erm-wga} that for clean data with any minority prior $\pi_0$, the WGA for ERM is given by \eqref{eq:ERM_error_orth}
    \item In \cref{lemma:ds-prior-noise} we show that the model learned after downsampling with noisy domain labels is equivalent to a clean ERM model learned with prior 
    \begin{equation*}
        \frac{(1-p)\pi_0}{4\pi_0^{(p)}} + \frac{p\pi_0}{4\big(1/2 - \pi_0^{(p)}\big)}.
    \end{equation*}
    \item We then show that the WGA of  downsampling strictly decreases in $p$ by examining the derivative.
    \item Finally we note that by \cref{prop:ds-uw-eq}, upweighting must learn the same model as downsampling
\end{enumerate}

We present the three lemmas below and use them to complete the proof. 

\begin{lemma}
    ERM with no data augmentation is agnostic to the domain label noise $p$, i.e., the model learned by ERM in \eqref{eq:gen-opt} in the setting of domain label noise is the same as that learned in the setting of clean domain labels (no noise).
    \label{lemma:erm-noise-invariance}
\end{lemma}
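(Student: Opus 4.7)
The plan is to show that the ERM objective is a functional of the marginal $P_{XY}$ alone, which is preserved under the symmetric domain-label noise model, so its minimizer cannot depend on $p$.

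First, I would recall that vanilla ERM without data augmentation corresponds to the instance of \eqref{eq:gen-opt} in which the per-group cost is a constant, say $c(Y,D) \equiv 1$. The objective then reads
\[
\mathbb{E}_{P_{XYD}}\bigl[\ell(f_\theta(X), Y)\bigr].
\]
Since the integrand contains no explicit $D$ dependence, an application of the tower property (equivalently, marginalizing out $D$) gives $\mathbb{E}_{P_{XY}}[\ell(f_\theta(X), Y)]$, which is a functional of the joint distribution $P_{XY}$ alone. In particular, two joint laws $P_{XYD}$ and $P'_{XYD'}$ that share the $(X,Y)$-marginal induce the exact same ERM objective and therefore the exact same minimizer(s).

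Next, I would verify that the symmetric domain label noise model preserves $P_{XY}$. By construction, the observed domain $\tilde D$ is produced by drawing $(X,Y,D) \sim P_{XYD}$ and then resampling $D \mapsto \tilde D$ according to a kernel $P_{\tilde D \mid D}$ that depends on $D$ only (independent of $X$ and $Y$ conditional on $D$). Marginalizing $\tilde D$ out of the resulting joint $P_{XY\tilde D}$ leaves $P_{XY}$ unchanged for every noise level $p$. Combining this with the reduction in the previous step, the ERM objective under noisy annotations coincides pointwise in $\theta$ with the one under clean annotations, so their argmin sets are identical and $\theta^*_{\mathrm{ERM}}$ is independent of $p$.

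I do not expect a substantive obstacle here; the content of the lemma is essentially the observation that only data augmentations which \emph{read} the domain label (DS, UW via $c(Y,D)$) can transmit domain-label noise into the learned classifier, whereas ERM consults only $(X,Y)$. The main care needed is simply to state cleanly that $c \equiv 1$ in the ERM reduction of \eqref{eq:gen-opt} and that the noise kernel is conditionally independent of $(X,Y)$ given $D$, both of which are explicit in the problem setup.
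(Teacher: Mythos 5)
Your proposal is correct and is essentially the paper's argument, made more explicit: the paper's one-line proof simply observes that ERM with no augmentation never consults the domain label, while you formalize this by noting that the objective with $c \equiv 1$ is a functional of $P_{XY}$ alone and that the symmetric noise kernel acting only on $D$ leaves $P_{XY}$ unchanged. No gaps; your version just spells out the marginalization step the paper leaves implicit.
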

\begin{proof}
Since ERM with no data augmentation does not use domain label information when learning a model, the model will remain unchanged under domain label noise.    
\end{proof}

In the following, let $\Phi: \mathbb R \to [0,1]$ be the CDF of $[w^T(X|Y,D)-\mu^{(Y,D)}]/\sqrt{w^T\Sigma w}$ for any $Y\in \mathcal{Y}$ and $D \in \mathcal{D}$
\begin{lemma}
    Let $\theta_\text{ERM}$ denote the optimal model parameter learned by ERM in \eqref{eq:gen-opt} using $f_\theta(x)=w^Tx+b$ and $\hat{Y} = \mathbbm{1}\{f_\theta(x) > 1/2\}$. Under \cref{as:equal_priors,as:latent_normal,as:mean_difference,as:orthogonality},
    \begin{equation}
    \text{WGA}({\theta_\text{ERM}}) = \Phi\left(\frac{ \|\Delta_D \|^2 - \Tilde{c}_{\pi_0}\|\Delta_C \|^2}{2(\snorm{\Delta_D} + \Tilde{c}_{\pi_0} \snorm{\Delta_C})} \right),
        \label{eq:ERM_error_orth}
    \end{equation}
    where $\Tilde{c}_{\pi_0} \coloneqq ({1-4\pi_0})/({1+2\pi_0(1-2\pi_0)\|\Delta_C\|^2})$ and $\|v \| \coloneqq \sqrt{v^T\Sigma^{-1}v}$.
    \label{lemma:erm-wga}
\end{lemma}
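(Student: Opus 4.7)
The plan is to obtain the ERM minimizer in closed form, use the orthogonality of \cref{as:orthogonality} to reduce the problem to a scalar identity for $-\beta/\alpha$, and then read off the WGA through the standardized CDF $\Phi$.

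\textbf{Step 1 (closed form and moments).} For squared loss, the normal equations give $w^\star = \mathrm{Cov}(X)^{-1}\mathrm{Cov}(X,Y)$ and $b^\star = \tfrac{1}{2} - w^{\star\top}\mathbb{E}[X]$. Writing $\mu^{(0,R)} = \mu_0$ and using the parallelogram structure implied by \cref{as:equal_priors,as:mean_difference}, a direct moment computation yields $\mathbb{E}[Y] = 1/2$, $\mathbb{E}[X] = \mu_0 + (\Delta_D+\Delta_C)/2$,
\begin{align*}
\mathrm{Cov}(X,Y) &= \tfrac{1}{4}\Delta_D - \big(\tfrac{1}{4}-\pi_0\big)\Delta_C, \\
\mathrm{Cov}(X) &= \Sigma + \tfrac{1}{4}\!\left[\Delta_D\Delta_D^{\top} + \Delta_C\Delta_C^{\top} + (4\pi_0-1)\big(\Delta_D\Delta_C^{\top} + \Delta_C\Delta_D^{\top}\big)\right].
\end{align*}
Both $\mathrm{Cov}(X,Y)$ and the rank-two correction in $\mathrm{Cov}(X)$ lie in $\mathrm{span}\{\Delta_D,\Delta_C\}$, so I make the ansatz $w^\star = \alpha\,\Sigma^{-1}\Delta_D + \beta\,\Sigma^{-1}\Delta_C$.

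\textbf{Step 2 (reduction to a scalar identity).} Substituting the ansatz into $\mathrm{Cov}(X)\,w^\star = \mathrm{Cov}(X,Y)$ and invoking \cref{as:orthogonality} ($\Delta_C^{\top}\Sigma^{-1}\Delta_D = 0$) decouples the equation along $\Delta_D$ and $\Delta_C$ into a $2\times 2$ linear system in $(\alpha,\beta)$ whose coefficients depend only on $\|\Delta_D\|^2$, $\|\Delta_C\|^2$, and $\pi_0$. Setting $t := -\beta/\alpha$ and $\gamma := 1-4\pi_0$, elimination reduces to $t\big[4 + (1-\gamma^2)\|\Delta_C\|^2\big] = 4\gamma$, and the identity $1-\gamma^2 = 8\pi_0(1-2\pi_0)$ gives exactly $-\beta/\alpha = \tilde c_{\pi_0}$.

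\textbf{Step 3 (per-group accuracies and WGA).} By \cref{as:latent_normal}, $w^{\star\top}X \mid (Y=y,D=d)$ is symmetric with mean $w^{\star\top}\mu^{(y,d)}$ and scale $\sqrt{w^{\star\top}\Sigma w^\star}$, so each group accuracy equals $\Phi$ evaluated at a signed distance to $\{w^{\star\top}x + b^\star = 1/2\}$. Plugging in $b^\star$ and the parallelogram means collapses the four arguments to the two values $w^{\star\top}(\Delta_D \pm \Delta_C)/(2\sqrt{w^{\star\top}\Sigma w^\star})$, with the ``$+$'' arm corresponding to the two minority groups. Orthogonality turns $w^{\star\top}\Delta_D$, $w^{\star\top}\Delta_C$, and $w^{\star\top}\Sigma w^\star$ into expressions in $\alpha, \beta, \|\Delta_D\|, \|\Delta_C\|$ alone; substituting $\beta = -\tilde c_{\pi_0}\alpha$ and checking that $\pi_0\le 1/4$ forces $\alpha > 0 \ge \beta$ (so the minorities really are the WGA minimizers) delivers the claimed closed form.

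\textbf{Main obstacle.} The crux is Step 2: sidestepping any explicit inversion of $\mathrm{Cov}(X)$ via the orthogonal ansatz, verifying the cross-term cancellation under \cref{as:orthogonality}, and recognizing $1-\gamma^2 = 8\pi_0(1-2\pi_0)$ as the factor that makes $\tilde c_{\pi_0}$ emerge in such tidy form. A smaller but necessary subtlety is ruling out the majority-worst regime, i.e., confirming that $\pi_0\in(0,1/4]$ is exactly the range in which $\beta\le 0\le\alpha$, so that the minority arm is the one that realizes the WGA.
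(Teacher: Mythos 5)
Your proposal is correct and reaches the same closed form as the paper, but the middle step is handled differently. Your moment computations are equivalent restatements of the paper's (the paper writes $\mathrm{Cov}(X,Y)=\tfrac14\bar\Delta$ and $\mathrm{Var}(X)=\Sigma+2\pi_0(1-2\pi_0)\Delta_C\Delta_C^T+\tfrac14\bar\Delta\bar\Delta^T$ with $\bar\Delta=\Delta_D-(1-4\pi_0)\Delta_C$ via its \cref{lemma:triangle}; expanding $\bar\Delta\bar\Delta^T$ recovers exactly your rank-two correction). Where the paper then inverts the covariance by applying a Sherman--Morrison-type identity (\cref{lemma:sherman-morrison-deriv}) to obtain a general coefficient $c_{\pi_0}$ that is only afterwards specialized under \cref{as:orthogonality} to $\Tilde{c}_{\pi_0}$, you avoid any explicit inversion by positing $w^\star=\alpha\Sigma^{-1}\Delta_D+\beta\Sigma^{-1}\Delta_C$, using \cref{as:orthogonality} to decouple the normal equations into a $2\times 2$ system, and eliminating to get $-\beta/\alpha=4\gamma/\bigl(4+(1-\gamma^2)\snorm{\Delta_C}^2\bigr)=\Tilde{c}_{\pi_0}$; I verified this elimination and the identity $1-\gamma^2=8\pi_0(1-2\pi_0)$, and your sign check ($\alpha>0\ge\beta$ iff $\pi_0\le 1/4$) plays the same role as the paper's observation that $\Tilde{c}_{\pi_0}\ge 0$ selects the minority arm. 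The trade-off: your ansatz is more elementary and makes the emergence of $\Tilde{c}_{\pi_0}$ transparent, at the cost of invoking orthogonality (and implicitly linear independence of $\Delta_D,\Delta_C$ and invertibility of $\Sigma$) from the outset, whereas the paper's route yields an expression valid without \cref{as:orthogonality} before specializing. One consistency note: your route produces the denominator $2\sqrt{\snorm{\Delta_D}^2+\Tilde{c}_{\pi_0}^2\snorm{\Delta_C}^2}$, which agrees with the paper's own derived expression \eqref{eq:ERM_error_orth-2} rather than the form $2(\snorm{\Delta_D}+\Tilde{c}_{\pi_0}\snorm{\Delta_C})$ printed in the lemma statement; that appears to be a typo in the statement, not a gap in your argument.
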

\begin{proof}
Since the model learned by ERM with no data augmentation is invariant to domain label noise by \cref{lemma:erm-noise-invariance}, we derive the the general form for the WGA of a model $f_{\theta}$ under the assumption of having clean domain label, and therefore using the original data parameters. We begin by deriving the individual accuracy terms $A^{(y,d)}(f_{\theta})$, $(y,d) \in \{0,1\} \times \{R,B\}$, with $f_\theta(x)=w^Tx+b$ and $\hat{Y} = \mathbbm{1}\{f_\theta(x) > 1/2\}$, as follows:
  \begin{align*}
    A^{(1,d)}(f_{\theta}) &\coloneqq P\left(\mathbbm 1\left\{w^TX+b > 1/2\right\} = Y \mid Y=1,D=d\right) \\
    & = P\left(w^TX+b > 1/2 \mid Y=1,D=d\right) \\
    & = 1-\Phi\left(\frac{1/2-(w^T\mu^{(1,d)}+b)}{\sqrt{w^T\Sigma w}} \right) \\
    & = \Phi\left(\frac{w^T\mu^{(1,d)}+b - 1/2}{\sqrt{w^T\Sigma w}} \right),
\end{align*}
  \begin{align*}
    A^{(0,d)}(f_{\theta}) &\coloneqq P\left(\mathbbm 1\left\{w^TX+b > 1/2\right\} = Y \mid Y=0,D=d\right) \\
    & = P\left(w^TX+b \le 1/2 \mid Y=0,D=d\right) \\
    & = \Phi\left(\frac{1/2-(w^T\mu^{(0,d)}+b)}{\sqrt{w^T\Sigma w}} \right).
\end{align*}
We now derive the optimal model parameters for ERM for any $\pi_0 \le 1/4$. In the case of ERM, $c(y,d)=1$ for $(y,d)\in\mathcal{Y}\times\mathcal{D}$, so the optimal solution to \eqref{eq:gen-opt} with $f_\theta(x)=w^Tx+b$ and $\hat{Y} = \mathbbm{1}\{f_\theta(x) > 1/2\}$ is
\begin{equation}
    w_\text{ERM} = \text{Var}(X)^{-1}\text{Cov}(X,Y), \quad \text{and} \quad b_\text{ERM} = \mathbb E[Y] - w^T\mathbb E[X].
\end{equation}
Note that
\begin{align*}
    \mathbb E[Y] &= \frac{1}{2}, \\
    \mathbb E[X] &= \mathbb E\left[\mathbb E[X|D,Y]\right] \\
    & = \sum_{(y,d) \in \{0,1\}\times\{R,B\}} [\mathbbm{1}(d=R)(\mu^{(0,R)}-\mu^{(0,B)})+(1-y)\mu^{(0,B)}+y\mu^{(1,B)}]\pi^{(y,d)} \\
    & = \frac{1}{2}(\mu^{(0,R)} + \mu^{(1,B)}).
\end{align*}
Let $\pi^{(d|y)} \coloneqq P(D=d|Y=y)$ for $(y,d) \in \mathcal{Y}\times\mathcal{D}$, $\mu^{(y)}\coloneqq\mathbb E[X|Y=y]$ for $y \in \mathcal{Y}$ and $\bar\Delta\coloneqq \mu^{(1)}-\mu^{(0)}$. We compute $\text{Var}(X)$ as follows:
\begin{align}
    \text{Var}(X) &= \mathbb E[\text{Var}(X|Y)] + \text{Var}(\mathbb E[X|Y]) \\
    & = \mathbb E\left[\mathbb E[\text{Var}(X|Y,D)|Y] + \text{Var}(\mathbb E[X|Y,D]|Y)\right] + \text{Var}(\mathbb E[X|Y]) \\
    & = \Sigma + \mathbb E[\text{Var}(\mathbb E[X|Y,D]|Y)] + \text{Var}(\mathbb E[X|Y]) \\
    & =  \Sigma + \mathbb E[\text{Var}(\mathbbm{1}(D=R)(\mu^{(0,R)}-\mu^{(0,B)})+(1-Y)\mu^{(0,B)}+Y\mu^{(1,B)}|Y)] + \text{Var}(\mathbb E[X|Y]) \\
    & = \Sigma + \Delta_C\Delta_C^T\mathbb E[\text{Var}(\mathbbm{1}(D=R)|Y)] + \text{Var}(Y(\mu^{(1)}-\mu^{(0)})+\mu^{(0)}) \\
    & = \Sigma + \Delta_C\Delta_C^T\mathbb E[\text{Var}(D|Y)] + \bar\Delta\bar\Delta^T\text{Var}(Y) \\
    & = \Sigma + \Delta_C\Delta_C^T\mathbb E[Y\pi^{(R|1)}\pi^{(B|1)}+(1-Y)\pi^{(R|0)}\pi^{(B|0)}] + \bar\Delta\bar\Delta^T\pi^{(1)}\pi^{(0)} \\
    & = \Sigma + 2\pi_0(1-2\pi_0)\Delta_C\Delta_C^T + \frac{1}{4}\bar\Delta\bar\Delta^T.
    \label{eq:erm-var}
\end{align}
Next, we compute $\text{Cov}(X,Y)$ as follows:
\begin{align}
    \text{Cov}(X,Y) &= \mathbb E[\text{Cov}(X,Y|Y)] + \text{Cov}(\mathbb E[X|Y], \mathbb E[Y|Y]) \\
    & = \text{Cov}(\mathbb E[X|Y], Y) \\
    & = \text{Cov}(\mu^{(0)}+Y\bar\Delta, Y) \\
    & = \text{Cov}(Y\bar\Delta, Y) \\
    & = \bar\Delta\text{Var}(Y) \\
    & = \pi^{(1)}\pi^{(0)}\bar\Delta \\
     & = \frac{1}{4}\bar\Delta.
     \label{eq:erm-cov}
\end{align}

\noindent In order to write \eqref{eq:erm-var} and \eqref{eq:erm-cov} only in terms of $\Delta_C$ and $\Delta_D$ and to see the effect of $\pi_0$, we show the following relationship between $\bar\Delta$, $\Delta_C$ and $\Delta_D$.

We introduce the following two minor lemmas that allow us to obtain clean expression for the optimal weights. 
\begin{lemma}
\label{lemma:triangle}
    Let $\bar\Delta\coloneqq \mu^{(1)}-\mu^{(0)}$. Then 
    \begin{align*}
        \Delta_D - \bar\Delta = (1-4\pi_0)\Delta_C
    \end{align*}
\end{lemma}

\begin{proof}
    We first note that \[
        \mu^{(1)} = 2\pi_0(\mu^{(1,B)}-\mu^{(1,R)}) + \mu^{(1,R)} = 2\pi_0 \Delta_C + \mu^{(1,R)}.
    \] Similarly, \[
        \mu^{(0)} = 2\pi_0(\mu^{(0,R)}-\mu^{(0,B)}) + \mu^{(0,B)} = -2\pi_0 \Delta_C + \mu^{(0,B)}.
    \]
    Combining these with the definitions of $\Delta_D$ and $\bar\Delta$, we get \begin{align*}
        \Delta_D - \bar\Delta &= \Delta_D - (\mu^{(1)} - \mu^{(0)}) \\ 
        &= \mu^{(1,R)} - \mu^{(0,R)} -2\pi_0 \Delta_C - \mu^{(1,R)} - 2\pi_0 \Delta_C + \mu^{(0,B)} \\
        &= \mu^{(0,B)} - \mu^{(0,R)} - 4\pi_0 \Delta_C\\
        &= (1-4\pi_0)\Delta_C.
    \end{align*}
\end{proof}
\noindent From \eqref{eq:erm-var}, \eqref{eq:erm-cov} and \cref{lemma:triangle}, we then obtain
\begin{equation}
    w_\text{ERM} = \frac{1}{4}\left(\Sigma + 2\pi_0(1-2\pi_0)\Delta_C\Delta_C^T + \frac{1}{4}\bar\Delta\bar\Delta^T \right)^{-1}\bar\Delta,
    \label{eq:opt-w-ERM}
\end{equation}
where $\bar\Delta \coloneqq \mu^{(1)}-\mu^{(0)} = \Delta_D - (1-4\pi_0)\Delta_C$, and
\begin{equation}
    b_\text{ERM} = \frac{1}{2} - \frac{1}{2}(w_\text{ERM})^T(\mu^{(0,R)}+\mu^{(1,B)}).
\end{equation}

Therefore,
\begin{align}
    A^{(1,d)}(f_{\theta_\text{ERM}})
    &= \Phi\left(\frac{(w_\text{ERM})^T\left(\mu^{(1,d)}-\frac{1}{2}(\mu^{(0,R)}-\mu^{(1,B)})\right)}{\sqrt{(w_\text{ERM})^T\Sigma w_\text{ERM}}} \right), 
    \label{eq:acc-term1-erm}\\  A^{(0,d)}(f_{\theta_\text{ERM}})
    &= \Phi\left(\frac{-(w_\text{ERM})^T\left(\mu^{(0,d)}-\frac{1}{2}(\mu^{(0,R)}-\mu^{(1,B)})\right)}{\sqrt{(w_\text{ERM})^T\Sigma w_\text{ERM}}} \right).
    \label{eq:acc-term2-erm}
\end{align}
We can simplify the expressions in \eqref{eq:acc-term1-erm} and \eqref{eq:acc-term2-erm} by using the following relations:
\begin{align*}
    \mu^{(0,R)}-\frac{1}{2}(\mu^{(0,R)}-\mu^{(1,B)}) &= \frac{1}{2}(\mu^{(0,R)} - \mu^{(1,B)}) = \frac{1}{2}(\mu^{(0,R)} - \mu^{(1,R)} + \mu^{(1,R)} - \mu^{(1,B)}) = -\frac{1}{2}(\Delta_C + \Delta_D), \\
    \mu^{(0,B)}-\frac{1}{2}(\mu^{(0,R)}-\mu^{(1,B)}) &= \frac{1}{2}(\mu^{(0,B)} - \mu^{(0,R)}) + \frac{1}{2}(\mu^{(0,B)} - \mu^{(1,B)})  = \frac{1}{2}(\Delta_C - \Delta_D), \\
    \mu^{(1,B)}-\frac{1}{2}(\mu^{(0,R)}-\mu^{(1,B)}) &= \frac{1}{2}(\mu^{(1,B)} - \mu^{(0,R)}) = \frac{1}{2}(\mu^{(1,B)} - \mu^{(1,R)} + \mu^{(1,R)} - \mu^{(0,R)})  = \frac{1}{2}(\Delta_C + \Delta_D), \\
    \mu^{(1,R)}-\frac{1}{2}(\mu^{(0,R)}-\mu^{(1,B)}) &= \frac{1}{2}(\mu^{(1,R)} - \mu^{(0,R)}) + \frac{1}{2}(\mu^{(1,R)} - \mu^{(1,B)})  = \frac{1}{2}(\Delta_D - \Delta_C). 
\end{align*}
Plugging these into \eqref{eq:acc-term1-erm} and \eqref{eq:acc-term2-erm} for each group $(y,d) \in \{0,1\} \times \{R,B\}$ yields
\begin{align*}
    A^{(0,R)}(f_{\theta_\text{ERM}})
    &= A^{(1,B)}(f_{\theta_\text{ERM}}) = \Phi\left(\frac{\frac{1}{2}(w_\text{ERM})^T\left(\Delta_C+\Delta_D\right)}{\sqrt{(w_\text{ERM})^T\Sigma w_\text{ERM}}} \right), \\
    A^{(0,B)}(f_{\theta_\text{ERM}})
    &= A^{(1,R)}(f_{\theta_\text{ERM}}) = \Phi\left(\frac{\frac{1}{2}(w_\text{ERM})^T\left(\Delta_D-\Delta_C\right)}{\sqrt{(w_\text{ERM})^T\Sigma w_\text{ERM}}} \right).
\end{align*}

Thus,
\begin{align}
    \text{WGA}(\theta_\text{ERM}) = \min\left\{\Phi\left(\frac{\frac{1}{2}(w_\text{ERM})^T\left(\Delta_C+\Delta_D\right)}{\sqrt{(w_\text{ERM})^T\Sigma w_\text{ERM}}} \right), \Phi\left(\frac{\frac{1}{2}(w_\text{ERM})^T\left(\Delta_D-\Delta_C\right)}{\sqrt{(w_\text{ERM})^T\Sigma w_\text{ERM}}} \right) \right\}
    \label{eq:wga-erm}
\end{align}

In order to rewrite \eqref{eq:opt-w-ERM} to be able to simplify \eqref{eq:wga-erm}, we will use the following lemma.
\begin{lemma}
    Let $A \in \mathbb R^{m\times m}$ be symmetric positive definite (SPD) and $u,v \in \mathbb R^m$. Then
    \[(A + vv^T + uu^T)^{-1}u = c_u\left( A^{-1}u - c_v A^{-1}v\right) \quad \text{with} \quad c_u \coloneqq \frac{1}{1+u^TB^{-1}u} \quad \text{and} \quad c_v \coloneqq\frac{v^TA^{-1}u}{1+v^TA^{-1}v}.\]
\label{lemma:sherman-morrison-deriv}
\end{lemma}
\begin{proof}
    Let $B \coloneqq A + vv^T$. Then
\begin{align*}
    (A + vv^T + uu^T)^{-1}u & = (B+uu^T)^{-1}u \\
    & = \left(B^{-1}- \frac{B^{-1}uu^TB^{-1}}{1+u^TB^{-1}u}\right)u \qquad \text{(Sherman-Morrison formula)} \\
    & = B^{-1}u- \frac{u^TB^{-1}u}{1+u^TB^{-1}u}B^{-1}u \\
    & = c_uB^{-1}u \\
    & = c_u\left( A^{-1} - \frac{A^{-1}vv^TA^{-1}}{1+v^TA^{-1}v}\right)u \qquad \text{(Sherman-Morrison formula)}.
\end{align*}
The assumption that $A$ is SPD guarantees that $A^{-1}$ exists, $B$ is SPD, and $c_u$ and $c_v$ are well-defined.
\end{proof}

Applying \cref{lemma:sherman-morrison-deriv} to \eqref{eq:opt-w-ERM} with $A = \Sigma$, $u = \bar\Delta/2$ and $v = \sqrt{\beta} \Delta_C$, where $\beta \coloneqq 2\pi_0(1-2\pi_0)$, yields
\begin{align*}
    w_\text{ERM} & = \gamma_\text{ERM}\left(\Sigma^{-1}\bar\Delta-\frac{\beta\Delta_C^T\Sigma^{-1}\bar\Delta}{1+\beta\Delta_C^T\Sigma^{-1}\Delta_C}\Sigma^{-1}\Delta_C \right) \\
    & = \gamma_\text{ERM}\left(\Sigma^{-1}\Delta_D- \frac{\delta+\beta\Delta_C^T\Sigma^{-1}\Delta_D}{1+\beta\Delta_C^T\Sigma^{-1}\Delta_C}\Sigma^{-1}\Delta_C \right) \qquad \text{(substituting $\bar\Delta = \Delta_D - \delta\Delta_C$)} \\
    & = \gamma_\text{ERM}\left(\Sigma^{-1}\Delta_D-c_{\pi_0}\Sigma^{-1}\Delta_C \right)
\end{align*}
with
\[\gamma_\text{ERM} \coloneqq \frac{1}{4+\bar\Delta^TA_\text{ERM}^{-1}\bar\Delta} \quad \text{and} \quad c_{\pi_0} \coloneqq \frac{\delta+\beta\Delta_C^T\Sigma^{-1}\Delta_D}{1+\beta\Delta_C^T\Sigma^{-1}\Delta_C}.\] 
Let $\|v \| \coloneqq \sqrt{v^T\Sigma^{-1}v}$ be the norm induced by the $\Sigma^{-1}$--inner product.
Then
\begin{align*}
    \text{WGA}({\theta_\text{ERM}}) &= \min\Bigg\{\Phi\left(\frac{(1-c_{\pi_0})\Delta_C^T\Sigma^{-1}\Delta_D + \|\Delta_D \|^2 - c_{\pi_0}\|\Delta_C \|^2}{2\|{\Delta_D - c_{\pi_0} \Delta_C}\|} \right), \\
    & \qquad \qquad \qquad \Phi\left(\frac{-(c_{\pi_0}+1)\Delta_C^T\Sigma^{-1}\Delta_D + \|\Delta_D \|^2 + c_{\pi_0}\|\Delta_C \|^2}{2\|\Delta_D - c_{\pi_0} \Delta_C\|} \right) \Bigg\}
\end{align*}
Under \cref{as:orthogonality}, $c_{\pi_0} = \Tilde{c}_{\pi_0} \coloneqq ({1-4\pi_0})/({1+2\pi_0(1-2\pi_0)\|\Delta_C\|^2})$ and
\begin{align}
    \text{WGA}({\theta_\text{ERM}}) = \min\left\{\Phi\left(\frac{ \|\Delta_D \|^2 - \Tilde{c}_{\pi_0}\|\Delta_C \|^2}{2\sqrt{\snorm{\Delta_D}^2 + \Tilde{c}_{\pi_0}^2 \snorm{\Delta_C}^2}} \right), \Phi\left(\frac{  \|\Delta_D \|^2 + \Tilde{c}_{\pi_0}\|\Delta_C \|^2}{2\sqrt{\snorm{\Delta_D}^2 + \Tilde{c}_{\pi_0}^2 \snorm{\Delta_C}^2}} \right) \right\},
    \label{eq:ERM_error}
\end{align}
where the first term is the accuracy of the minority groups and the second is that of the majority groups.
In order to compare the WGA of ERM with the WGA of DS, we first show that under \cref{as:orthogonality} the WGA of ERM is given by the majority accuracy term in \eqref{eq:ERM_error}. Since $\Tilde{c}_{\pi_0} \ge 0$ for $\pi_0 \le 1/4$, we have that
\begin{align*}
    \frac{ \|\Delta_D \|^2 - \Tilde{c}_{\pi_0}\|\Delta_C \|^2}{2\sqrt{\snorm{\Delta_D}^2 + \Tilde{c}_{\pi_0}^2 \snorm{\Delta_C}^2}} \le \frac{ \|\Delta_D \|^2 +\Tilde{c}_{\pi_0}\|\Delta_C \|^2}{2\sqrt{\snorm{\Delta_D}^2 + \Tilde{c}_{\pi_0}^2 \snorm{\Delta_C}^2}}
    \quad \Leftrightarrow \quad\Tilde{c}_{\pi_0}\snorm{\Delta_C}^2 \ge 0,
\end{align*}
which is satisfied for all $\pi_0 \le 1/4$ with equality at $\pi_0 = 1/4$. Since $\Phi$ is increasing, we get
\begin{align}
    \text{WGA}({\theta_\text{ERM}}) = \Phi\left(\frac{ \|\Delta_D \|^2 - \Tilde{c}_{\pi_0}\|\Delta_C \|^2}{2\sqrt{\snorm{\Delta_D}^2 + \Tilde{c}_{\pi_0}^2 \snorm{\Delta_C}^2}} \right).
    \label{eq:ERM_error_orth-2}
\end{align}
\end{proof}

\begin{lemma}
   Learning the model in \eqref{eq:gen-opt} after downsampling according to noisy domain labels using the noisy minority prior $\pi^{(p)}_0 \coloneqq (1-p)\pi_0 + p (1/2 - \pi_0)$ for $p \in [0,1/2]$ is equivalent to learning the model with clean domain labels (no noise) and using the minority prior 
\begin{equation}
    \pi_\text{DS}^{(p)} \coloneqq \frac{(1-p)\pi_0}{4\pi^{(p)}_0} + \frac{p \pi_0}{4(1/2 - \pi^{(p)}_0)}.
    \label{eq:ds_prior-appendix}
\end{equation}
\label{lemma:ds-prior-noise}
\end{lemma}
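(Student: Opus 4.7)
The plan is to track how symmetric domain noise reshuffles the true group priors after downsampling, and then appeal to \cref{prop:ds-uw-eq} to convert the downsampled problem back into an ERM problem over a clean mixture with a single altered parameter $\pi^{(p)}_\text{DS}$.

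\textbf{Step 1: Write the joint law of true and noisy domains.} Fix the minority group $(Y,D)=(0,R)$ with true prior $\pi_0$; the complementary majority group $(0,B)$ has true prior $1/2-\pi_0$. Under symmetric domain noise with parameter $p$, the observed domain $\tilde D$ equals the true $D$ with probability $1-p$ and is flipped with probability $p$, independently of $X$ and $Y$. Hence
\begin{align*}
P(\tilde D=R,Y=0) &= (1-p)\pi_0 + p(1/2-\pi_0) = \pi^{(p)}_0, \\
P(\tilde D=B,Y=0) &= 1/2 - \pi^{(p)}_0,
\end{align*}
and analogously for $Y=1$. Crucially, $X$ is conditionally independent of $\tilde D$ given $(Y,D)$, so $P_{X\mid Y,D}$ is preserved exactly.

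\textbf{Step 2: Compute true-group composition inside each noisy group.} Applying Bayes' rule, the fraction of samples in the noisy minority pseudo-group $(Y=0,\tilde D=R)$ that actually come from the true minority $(0,R)$ is $(1-p)\pi_0/\pi^{(p)}_0$, while the fraction in the noisy majority pseudo-group $(Y=0,\tilde D=B)$ that actually come from the true minority is $p\pi_0/(1/2-\pi^{(p)}_0)$.

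\textbf{Step 3: Downsample and reassemble.} DS reweights the four noisy pseudo-groups to uniform prior $1/4$ each. Summing the true-minority contributions across the two pseudo-groups that carry $Y=0$, the total mass placed on the true group $(0,R)$ in the downsampled distribution is
\begin{equation*}
\pi^{(p)}_\text{DS} = \frac{1}{4}\cdot\frac{(1-p)\pi_0}{\pi^{(p)}_0} + \frac{1}{4}\cdot\frac{p\pi_0}{1/2-\pi^{(p)}_0},
\end{equation*}
which matches \eqref{eq:ds_prior-appendix}. By the symmetry in \cref{as:equal_priors}, the same expression holds for $(1,B)$, and the remaining mass $1/2-\pi^{(p)}_\text{DS}$ splits equally between $(1,R)$ and $(0,B)$.

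\textbf{Step 4: Identify with a clean ERM problem.} Since each conditional $P_{X\mid Y,D}$ is unaffected by the noise and downsampling only rescales the mixture weights on the true groups, the downsampled distribution is exactly the clean data-generating distribution with minority prior replaced by $\pi^{(p)}_\text{DS}$. Applying \cref{prop:ds-uw-eq}, which guarantees that DS on such a mixture solves the same population objective as the unweighted ERM on the rebalanced mixture, concludes that the model learned by DS under noise is precisely the clean ERM minimizer with prior $\pi^{(p)}_\text{DS}$. The main bookkeeping challenge is Step 3, where one must be careful to weight each noisy pseudo-group by $1/4$ (post-downsampling) rather than by its original prior; once that is done, the rest is a direct application of Bayes' rule and \cref{prop:ds-uw-eq}.
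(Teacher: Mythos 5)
Your proposal is correct and follows essentially the same route as the paper: you track, via Bayes' rule, how much true-minority mass sits inside each noisy pseudo-group, note that DS equalizes the four noisy groups (your per-group weight $1/4$ is just the paper's unnormalized bookkeeping divided by the total mass $4\pi_0^{(p)}$), and conclude the retraining distribution is the clean mixture with minority prior $\pi_\text{DS}^{(p)}$, exactly as in the paper's proof. The only cosmetic quibble is your final appeal to \cref{prop:ds-uw-eq}: that proposition concerns the DS--UW equivalence and is not needed here, since in the population setting training without weights on the downsampled distribution is by definition ERM on that rebalanced mixture.
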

\begin{proof}
    We note that the model learned after downsampling is agnostic to domain labels, so only the true proportion of each group, not the noisy proportion, determines the model weights. We derive the equivalent \emph{clean} prior. 
    We do so by examining how true minority samples are affected by DS on the data with noisy domain labels. 
    When DS is performed on the data with domain label noise, the true minority samples that are kept can be categorized as (i) those that are still minority samples in the noisy data and (ii) a proportion of those that have become majority samples in the noisy data. 

 The first type of samples appear with probability
    \begin{equation}
        (1-p)\pi_0,
    \end{equation}
    i.e., the proportion of true minority samples whose domain was not flipped. 
    The second type of samples are kept with probability
    \begin{equation}
        p\pi_0 \left(\frac{\pi^{(p)}_0}{1/2 - \pi^{(p)}_0}\right),
    \end{equation}
    where the factor dependent on $\pi_0^{(p)}$ is the factor by which the size of the noisy majority groups will be reduced to be the same size as the noisy minority groups.

    Therefore, the unnormalized true minority prior can be written as 
    \begin{equation}
        (1-p)\pi_0 + p\pi_0 \left(\frac{\pi^{(p)}_0}{1/2 - \pi^{(p)}_0}\right).
    \end{equation}

    We can repeat the same analysis for the majority groups to obtain the unnormalized true majority prior as
    \begin{equation}
        p(1/2-\pi_0) + (1-p)(1/2-\pi_0) \left(\frac{\pi^{(p)}_0}{1/2 - \pi^{(p)}_0}\right).
    \end{equation}

    In order for the true minority and true majority priors to sum to one over the four groups, we divide by the normalization factor $4\pi_0^{(p)}$, so our final minority prior is given by \begin{equation}
        \frac{(1-p)\pi_0 + p\pi_0 \left(\frac{\pi^{(p)}_0}{1/2 - \pi^{(p)}_0}\right)}{4\pi_0^{(p)}} = \frac{(1-p)\pi_0}{4\pi_0^{(p)}} + \frac{p\pi_0}{4(1/2-\pi_0^{(p)})}.
    \end{equation}

\end{proof}

DS is usually a special case of ERM with $\pi_0=1/4$. However, since DS uses domain labels and therefore is not agnostic to noise, we need to use the prior derived in  \cref{lemma:ds-prior-noise} to be able to analyze the effect of the noise $p$ while still using the clean data parameters. Note that $\pi_\text{DS}^{(p)}$ defined in \eqref{eq:ds_prior-appendix} decreases from $1/4$ to $\pi_0$ as the noise $p$ increases from $0$ to $1/2$. Since we can interpolate between $1/4$ to $\pi_0$ using $p$, we can therefore substitute $\pi_0$ in \eqref{eq:ERM_error_orth} with $\pi_\text{DS}^{(p)}$ and then examine the resulting expression as a function of $p$ for any $\pi_0$. Using the WGA of ERM from \cref{lemma:erm-wga}, we take the following derivative:
\begin{align*}
    \frac{\partial}{\partial p} \frac{\|\Delta_D \|^2 - \Tilde{c}_{\pi_\text{DS}^{(p)}}\|\Delta_C \|^2}{2\sqrt{\snorm{\Delta_D}^2 + \Tilde{c}_{\pi_\text{DS}^{(p)}}^2 \snorm{\Delta_C}^2}} 
    & = \frac{-\snorm{\Delta_D}^2\snorm{\Delta_C}^2\left(\Tilde{c}_{\pi_\text{DS}^{(p)}}+1\right)}{2\left(\snorm{\Delta_D}^2+\Tilde{c}_{\pi_\text{DS}^{(p)}}^2 \snorm{\Delta_C}^2\right)^{3/2}}\times\frac{-2\left(16\left(\pi_\text{DS}^{(p)}\right)^2-8\pi_\text{DS}^{(p)}+3\right)\snorm{\Delta_C}}{\left(1+2\pi_\text{DS}^{(p)}\left(1-2\pi_\text{DS}^{(p)}\right)\snorm{\Delta_C}^2\right)^2} \\&\qquad \times \frac{\pi_0(4\pi_0-1)(2\pi_0-1)(2p-1)}{2\left(\pi_0(2-4p)+p\right)^2\left(\pi_0(4p-2)-p+1\right)^2},
\end{align*}
    which is strictly negative for $p < 1/2$ and $\pi_0 < 1/4$. Therefore, for any $\pi_0 < 1/4$, the WGA of DS is strictly decreasing in $p$ and recovers the WGA of ERM when $p=1/2$ or when $\pi_0 = 1/4$. Thus, for $p \le 1/2$ and $\pi_0 \le 1/4$,
    \begin{align}
    \text{WGA}({\theta_\text{ERM}}) \le \text{WGA}({\theta_\text{DS}^{(p)}}) = \Phi\left(\frac{\|\Delta_D \|^2 - \Tilde{c}_{\pi_\text{DS}^{(p)}}\|\Delta_C \|^2}{2\left(\snorm{\Delta_D} + \Tilde{c}_{\pi_\text{DS}^{(p)}} \snorm{\Delta_C}\right)}  \right),
    \label{eq:ERM_DS_comp-1}
    \end{align}
    with equality when $p=1/2$ or $\pi_0=1/4$. 
Additionally, by \cref{prop:ds-uw-eq},
\begin{align}
     \text{WGA}({\theta_\text{UW}^{(p)}}) = \text{WGA}({\theta_\text{DS}^{(p)}}) = \Phi\left(\frac{\|\Delta_D \|^2 - \Tilde{c}_{\pi_\text{DS}^{(p)}}\|\Delta_C \|^2}{2\left(\snorm{\Delta_D} + \Tilde{c}_{\pi_\text{DS}^{(p)}} \snorm{\Delta_C}\right)}   \right).
    \label{eq:ERM_DS_comp}
\end{align}
\section{Datasets}
\label{appendix:datasets}
\begin{table}[h]
\caption{Dataset splits}
\centering
\begin{tabular}{lllrrr}
\hline
\multirow{2}{*}{Dataset} & \multicolumn{2}{c}{Group, $g$} & \multicolumn{3}{c}{Data Quantity} \\
\cline{2-6}

                         & Class, $y$       & Domain, $d$ & Train     & Val       & Test      \\
\hline
\multirow{4}{*}{CelebA}                      & non-blond      & female      & 71629     & 8535      & 9767      \\
                                             & non-blond      & male        & 66874     & 8276      & 7535      \\
                                             & blond          & female      & 22880     & 2874      & 2480      \\
                                             & blond          & male        & 1387      & 182       & 180       \\
\hline
\multirow{4}{*}{Waterbirds}                  & landbird       & land        & 3506      & 462       & 2255      \\
                                             & landbird       & water       & 185       & 462       & 2255      \\
                                             & waterbird      & land        & 55        & 137       & 642       \\
                                             & waterbird      & water       & 1049      & 138       & 642       \\
\hline                                             
\multirow{4}{*}{CMNIST}                      & 0-4            & green       & 1527      & 747       & 786       \\
                                             & 0-4            & red         & 13804     & 6864      & 6868      \\
                                             & 5-9            & green       & 13271     & 6654      & 6639      \\
                                             & 5-9            & red         & 1398      & 735       & 707       \\
\hline
\multirow{6}{*}{MultiNLI}                    & contradiction  & no negation & 57498     & 22814     & 34597     \\
                                             & contradiction  & negation    & 11158     & 4634      & 6655      \\
                                             & entailment     & no negation & 67376     & 26949     & 40496     \\
                                             & entailment     & negation    & 1521      & 613       & 886       \\
                                             & neither        & no negation & 66630     & 26655     & 39930     \\
                                             & neither        & negation    & 1992      & 797       & 1148      \\
\hline
\end{tabular}
\end{table}
\section{Experimental Design}
\label{appendix:experimental_design}
For the image datasets, we use a Resnet50 model pre-trained on ImageNet, imported from \verb|torchvision| as the upstream model. For the text datasets, we use a BERT model pre-trained on Wikipedia, imported from the \verb|transformers| package as the upstream model.
In our experiments, we assume access to a validation set with clean domain annotations, which we use to tune the hyperparameters. For all methods, we tune the inverse of $\lambda$, where $\lambda$ is the regularization strength, over $20$ (equally-spaced on a log scale) values ranging from $1\text{e}-4$ to $1$. RAD-UW has more hyperparameters to tune, which we discuss in detail below for each dataset. For all the methods we tested, for each noise level, we perform ten rounds of hyperparameter tuning over ten seeds of adding domain annotation noise. We then fix the most commonly chosen value across the ten rounds. With the fixed hyperparameters, we rerun the algorithms over 10 random seeds of noise and report the mean and standard deviation across the ten seeds. RAD-UW uses a regularized linear model implemented with \verb|pytorch| for the pseudo-annotation of domain labels (henceforth referred to as the \emph{pseudo-annotation model}) and uses \verb|LogisticRegression| model imported from \verb|sklearn.linear_model| as the retraining model with upweighting. The pseudo-annotation model uses a weight decay of $1\text{e}-3$ with the \verb|AdamW| optimizer from \verb|pytorch|. For all datasets except Waterbirds, the pseudo-annotation model is trained for $6$ epochs. Waterbirds is trained for $60$ epochs. For all datasets except CMNIST, we tune the inverse of $\lambda_\text{ID}$ for the pseudo-annotation model over $20$ (equally-spaced on a log scale) values ranging from $1\text{e}-7$ to $1\text{e}-3$. For CMNIST, we tune the inverse of $\lambda_\text{ID}$ $20$ (equally-spaced on a log scale) values ranging from $1\text{e}-1$ to $1\text{e}2$. For the retraining model, we tune the inverse of $\lambda$ over $20$ (equally-spaced on a log scale) values ranging from $1\text{e}-4$ to $1$. 
\subsection{Waterbirds}
For the upstream ResNet50 model, we use a constant learning rate of $1\text{e}-3$, momentum of $0.9$, and weight decay of $1\text{e}-3$. We train the upstream model for $100$ epochs. We leverage random crops and random horizontal flips as data-augmentation during the training.
For RAD-UW, we tune $\lambda$ for both the pseudo-annotation model and the retraining model. We tune the upweighting factor, $c$, for the retraining model over $5$ equally spaced values ranging from $5$ to $20$. The pseudo-annotation model uses a constant learning rate of $1\text{e}-5$. 
\subsection{CelebA}
For the upstream ResNet50 model, we use a constant learning rate of $1\text{e}-3$, momentum of $0.9$ and weight decay of $1\text{e}-4$. We train the upstream model for $50$ epochs while using random crops and random horizontal flips for data augmentation.
We tune the upweighting factor, $c$, for the retraining model over $5$ equally spaced values ranging from $20$ to $40$. The pseudo-annotation model is trained with an initial learning rate of $1\text{e}-5$ with \verb|CosineAnnealingLR| learning rate scheduler from \verb|pytorch|. 
\subsection{CMNIST}
For the ResNet50 model, we use a constant learning rate of $1\text{e}-3$, momentum of $0.9$ and weight decay of $1\text{e}-3$. We train the model for $10$ epochs without any data augmentation. We tune the upweighting factor, $c$, over $5$ equally spaced values ranging from $20$ to $40$. The pseudo-annotation model is trained with an initial learning rate of $1\text{e}-5$ with \verb|CosineAnnealingLR| learning rate scheduler from \verb|pytorch|. The spurious correlations in CMNIST is between the digit being between less than 5 and the color of the digit. We note that this simple correlation is quite easy to learn, so the $\ell_1$ penalty needed in quite small.
\subsection{MultiNLI}
We train the BERT model using code adapted from \cite{izmailov2022feature}. We train the model for $10$ epochs with an initial learning rate of $1\text{e}-4$ and a weight decay of $1\text{e}-4$. We use the \verb|linear| learning rate scheduler imported from the \verb|transformers| library. The upweight factor is tuned over $5$ equally spaced values ranging from $4$ to $10$. The pseudo-annotation model is trained with an initial learning rate of $1\text{e}-4$ with \verb|CosineAnnealingLR| learning rate scheduler from \verb|pytorch|.
\subsection{Civil Comments}
Similar to MultiNLI, We train the BERT model using code adapted from \cite{izmailov2022feature}. We train the model for $10$ epochs with an initial learning rate of $1\text{e}-5$ and a weight decay of $1\text{e}-4$. We use the \verb|linear| learning rate scheduler imported from the \verb|transformers| library. The upweight factor is tuned over $5$ equally spaced values ranging from $4$ to $10$. The pseudo-annotation model is trained with an initial learning rate of $1\text{e}-4$ with \verb|CosineAnnealingLR| learning rate scheduler from \verb|pytorch|.
\section{M-SELF Implementation}

We implemented misclassification-SELF using code adapted from \citet{labonte23towards} so that it would be compatible with our setup, where we use pre-generated embeddings from the base models. We fix the \verb|finetuning steps|, which is the number of steps of fine-tuning we perform once we construct the class-balanced error set. We tune the learning rate and the number of points that are selected for class balancing. The hyperparameter values and ranges used are given in  \cref{tab: SELF_hparams}

\begin{table}
\centering
\caption{M-SELF Hyperparameters} 
\label{tab: SELF_hparams}
\begin{tabular}{lcccc}
\hline

\multicolumn{1}{c}{Dataset} & fine-tuning steps &   learning rate range         & num points range   \\ \hline
CelebA                        & 500 & [1e-6, 1e-5, 1e-4]      &    [2, 20, 100]                     \\
Waterbirds                     & 500 & [1e-4, 1e-3, 1e-2]           &    [20, 100, 500]        \\
CMNIST                        & 500 & [1e-5, 1e-4, 1e-3]                  &   [100, 500, 700]             \\
Civilcomments                        & 200 & [1e-6, 1e-5, 1e-4]       &    [20, 100, 500]    \\
\hline

\end{tabular}
\end{table}

\section{Model Averaging}
\label{appendix:averaging}
DFR \cite{kirichenko23last} emphasizes the importance of model averaging in their setting as seeing different downsampled retraining sets can dramatically change the fair classifier. To examine these effects we train GDS 10 times and average the weights over those 10 runs. We report the mean and standard deviation of the worst-group accuracy of this averaged model over 10 independent runs.
\begin{table}[]
\centering
\caption{CelebA group downsampling WGA}
\label{tab:celebA_avg}
\begin{tabular}{llllll}
\hline
\multicolumn{1}{c}{\multirow{2}{*}{Method}} & \multicolumn{5}{c}{Label Noise}                                                                                                    \\ \cline{2-6} 
\multicolumn{1}{c}{}                        & \multicolumn{1}{c}{0\%} & \multicolumn{1}{c}{5\%} & \multicolumn{1}{c}{10\%} & \multicolumn{1}{c}{15\%} & \multicolumn{1}{c}{20\%} \\ \hline
Averaged                                    & 87.33±0.46              & 83.78±1.33              & 79.22±2.14               & 78.67±3.78               & 77.33±3.03               \\
Single                                      & 84.11±3.25              & 81.56±3.05              & 76.44±3.82               & 78.67±4.67               & 76.89±3.06              \\ \hline
\end{tabular}
\end{table}

\begin{table}[]
\centering
\caption{CMNIST group downsampling WGA}
\label{tab:cmnist_avg}
\begin{tabular}{llllll}
\hline
\multicolumn{1}{c}{\multirow{2}{*}{Method}} & \multicolumn{5}{c}{Label Noise}                                                                                                    \\ \cline{2-6} 
\multicolumn{1}{c}{}                        & \multicolumn{1}{c}{0\%} & \multicolumn{1}{c}{5\%} & \multicolumn{1}{c}{10\%} & \multicolumn{1}{c}{15\%} & \multicolumn{1}{c}{20\%} \\ \hline
Averaged                                    & 96.12±0.21              & 95.08±0.25              & 94.62±0.41               & 94.42±0.54               & 93.75±0.59               \\
Single                                      & 95.76±0.33              & 94.42±0.55              & 94.31±0.34               & 94.26±0.64               & 93.41±0.73              \\ \hline
\end{tabular}
\end{table}

\begin{table}[]
\centering
\caption{Waterbirds group downsampling WGA}
\label{tab:wb_avg}
\begin{tabular}{llllll}
\hline
\multicolumn{1}{c}{\multirow{2}{*}{Method}} & \multicolumn{5}{c}{Label Noise}                                                                                                    \\ \cline{2-6} 
\multicolumn{1}{c}{}                        & \multicolumn{1}{c}{0\%} & \multicolumn{1}{c}{5\%} & \multicolumn{1}{c}{10\%} & \multicolumn{1}{c}{15\%} & \multicolumn{1}{c}{20\%} \\ \hline
Averaged                                    & 91.78±0.68              & 91.81±0.81              & 92.06±0.62               & 91.78±0.68               & 91.84±0.67               \\
Single                                      & 91.59±0.61              & 91.40±0.51              & 91.34±0.65               & 91.78±0.48               & 91.00±0.78              \\ \hline
\end{tabular}
\end{table}

\begin{table}[]
\centering
\caption{CivilComments group downsampling WGA}
\label{tab:civilcomments_avg}
\begin{tabular}{llllll}
\hline
\multicolumn{1}{c}{\multirow{2}{*}{Method}} & \multicolumn{5}{c}{Label Noise}                                                                                                    \\ \cline{2-6} 
\multicolumn{1}{c}{}                        & \multicolumn{1}{c}{0\%} & \multicolumn{1}{c}{5\%} & \multicolumn{1}{c}{10\%} & \multicolumn{1}{c}{15\%} & \multicolumn{1}{c}{20\%} \\ \hline
Averaged                                    & 80.45±0.43              & 80.27±0.35              & 80.73±0.57               & 80.64±0.64               & 80.57±0.86               \\
Single                                      & 80.47±0.33              & 80.11±0.47              & 80.64±0.61               & 80.53±0.57               & 80.45±1.10              \\ \hline
\end{tabular}
\end{table}

We see in \cref{tab:celebA_avg,tab:cmnist_avg,tab:wb_avg,tab:civilcomments_avg} that for each dataset, averaging helps to reduce variance and provides modest increases in performance for both CelebA and CMNIST. This same model averaging could be performed both for SELF and RAD, but would have no effect in GUW as the weighting set is fixed as is the upweighting factor. 
\section{Additional Tables}
\label{appendix:tables}

Methods which require domain annotations (DA) are denoted with a ``Y" in the appropriate column, while those that are agnostic to DA are denoted as ``N". We collate the results of similar methods and separate those for different approaches in our tables by horizontal lines. Finally, results for methods designed to annotate domains before using data augmentations, namely RAD-UW and SELF \cite{labonte23towards}, are collected at the bottom of each table.
\begin{table*}[t]
\caption{\textbf{CMNIST WGA under domain label noise.} CMNIST has a relatively small spurious correlation and as such even LLR performs quite well. Additionally, CMNIST is already class balanced so class downsampling and upweighting have no effect. \cite{labonte23towards} do not provide WGA for CMNIST; RAD-UW matches the performance of group-dependent methods at 10\% and surpasses it beyond that.}
\label{tab:CMNIST}
\centering
\begin{tabular}{lcccccc} 
\hline
\multirow{2}{*}{Method}   &   \multirow{2}{*}{\shortstack{DA}}       & \multicolumn{5}{c}{Domain Annotation Noise (\%)}                \\ 
\cline{3-7}
                               &  & 0          & 5          & 10         & 15         & 20          \\ 
\hline & \\[-2ex]
Group Downsample        & Y       & $\mathbf{95.73\pm0.58}$ & $\mathbf{94.77\pm0.46}$ & $\mathbf{94.26\pm0.33}$ & $93.90\pm0.58$ & $93.15\pm0.41$  \\ 
Group Upweight          & Y       & $95.19\pm0$    & $94.60\pm0.22$ & $94.03\pm0.22$ & $93.54\pm0.31$ & $93.12\pm0.28$  \\ 
\hline & \\[-2ex]
Class Downsample        & N       & $91.47\pm0.13$ & $91.47\pm0.13$ & $91.47\pm0.13$ & $91.47\pm0.13$ & $91.47\pm0.13$  \\ 
Class Upweight           & N      & $91.94\pm0.09$ & $91.94\pm0.09$ & $91.94\pm0.09$ & $91.94\pm0.09$ & $91.94\pm0.09$  \\ 
LLR                      & N      & $91.81\pm0.08$ & $91.81\pm0.08$ & $91.81\pm0.08$ & $91.81\pm0.08$ & $91.81\pm0.08$  \\
\hline & \\[-2ex]
RAD-UW      & N     & $94.06\pm0.15$ & $94.06\pm0.15$ & $94.06\pm0.15$ & $\mathbf{94.06\pm0.15}$ & $\mathbf{94.06\pm0.15}$  \\ 
M-SELF (Our Imp.) & N & ${92.04\pm0.20}$ & ${92.04\pm0.20}$& ${92.04\pm0.20}$& ${92.04\pm0.20}$& ${92.04\pm0.20}$ \\
\hline
\end{tabular}
\end{table*}

\begin{table*}[t]
\caption{\textbf{CelebA WGA under domain label noise.} RAD-UW outperforms domain annotation-dependent methods at only 5\% noise and existing misclassification-based domain annotation-free baselines at every noise level.}
\label{tab:CelebA}
\centering
\begin{tabular}{l c c c c c c} 
\hline 
\multirow{2}{*}{Method}   &   \multirow{2}{*}{\shortstack{DA}}    & \multicolumn{5}{c}{Domain Annotation Noise (\%)}                \\ 
\cline{3-7}
                              &   & 0          & 5          & 10         & 15         & 20          \\ 
\hline & \\[-2ex]
Group Downsample          &  Y    & $86.05\pm0.84$ & $\mathbf{84.05\pm1.83}$ & $83.00\pm1.17$  & $81.67\pm1.89$  & $78.83\pm1.98$  \\ 
Group Upweight            &  Y   & $\mathbf{86.11\pm0}$    & $82.55\pm0.67$ & $82.39\pm0.43$   & $81.72\pm0.52$ & $78.72\pm0.96$  \\ 
\hline & \\[-2ex]
Class Downsample    &       N     & $74.5\pm0.94$ & $74.5\pm0.94$ & $74.5\pm0.94$ & $74.5\pm0.94$ & $74.5\pm0.94$  \\ 
Class Upweight      &       N    & $73.89\pm0.00$    & $73.89\pm0.00$    & $73.89\pm0.00$    & $73.89\pm0.00$    & $73.89\pm0.00$     \\ 
LLR                 &       N    & $44.28\pm0.5$ & $44.28\pm0.5$  & $44.28\pm0.5$  & $44.28\pm0.5$ & $44.28\pm0.5$  \\
\hline & \\[-2ex]
RAD-UW    &    N   & $83.51\pm0.02$    & $83.51\pm0.02$  & $\mathbf{83.51\pm0.02}$         & $\mathbf{83.51\pm0.02}$       & $\mathbf{83.51\pm0.02}$          \\
M-SELF (Our Imp.) & N & $83.48\pm0.02$& $83.48\pm0.02$& $83.48\pm0.02$& $83.48\pm0.02$& $83.48\pm0.02$ \\
M-SELF (\citeauthor{labonte23towards}) & N & $83.0\pm6.1$  & $83.0\pm6.1$ & $83.0\pm6.1$  & $83.0\pm6.1$  & $83.0\pm6.1$ \\
\hline
\end{tabular}
\end{table*}

\begin{table*}[t]
\caption{\textbf{Waterbirds WGA under domain label noise.} The validation (retraining) split for Waterbirds is domain balanced already, so class and group balancing perform equivalently. All domain annotation-free methods show improvement over baselines.} 
\label{tab:wb}
\centering
\begin{tabular}{lcccccc} 
\hline
\multirow{2}{*}{Method}   &   \multirow{2}{*}{\shortstack{DA}}       & \multicolumn{5}{c}{Domain Annotation Noise (\%)}                \\ 
\cline{3-7}
                  &               & 0          & 5          & 10         & 15         & 20          \\ 
\hline & \\[-2ex]
Group Downsample          &  Y      & $92.14\pm0.99$ & $91.73\pm0.44$ & $92.07\pm0.27$  & $92.19\pm0.73$ & $92.44\pm0.65$  \\ 
Group Upweight            &  Y     & $90.92\pm0.07$    & $90.98\pm0.13$ & $90.95\pm0.15$    & $90.86\pm0.16$ & $90.89\pm0.25$  \\ 
\hline & \\[-2ex]
Class Downsample          & N         & $91.44\pm0.31$ & $91.44\pm0.31$ & $91.44\pm0.31$ & $91.44\pm0.31$ & $91.44\pm0.31$  \\ 
Class Upweight            & N     & $91.40\pm0.06$   & $91.40\pm0.06$    & $91.40\pm0.06$    & $91.40\pm0.06$    & $91.40\pm0.06$     \\ 
LLR                       & N   & $87.35\pm0.06$ & $87.35\pm0.06$  & $87.35\pm0.06$  & $87.35\pm0.06$ & $87.35\pm0.06$  \\
\hline & \\[-2ex]
RAD-UW &                 N  &  $\mathbf{92.52\pm0}$    & $\mathbf{92.52\pm0}$   & $\mathbf{92.52\pm0}$  & $\mathbf{92.52\pm0}$   & $\mathbf{92.52\pm0}$    \\ 
M-SELF (Our Imp.) & N & $\mathbf{92.83\pm0.49}$& $\mathbf{92.83\pm0.49}$& $\mathbf{92.83\pm0.49}$& $\mathbf{92.83\pm0.49}$& $\mathbf{92.83\pm0.49}$ \\
M-SELF (\citeauthor{labonte23towards})  & N & $\mathbf{92.6\pm0.8}$ & $\mathbf{92.6\pm0.8}$ & $\mathbf{92.6\pm0.8}$  & $\mathbf{92.6\pm0.8}$  & $\mathbf{92.6\pm0.8}$  \\
\hline
\end{tabular}
\end{table*}

\begin{table*}[t]
\caption{\textbf{MultiNLI WGA under domain label noise.} SELF's reported result performs strongly on this dataset, perhaps due to the structure of the spurious correlation. We were unable to replicate the results of \citet{labonte23towards} with our implementation. Regardless, both SELF and RAD-UW outperform group-dependent methods at label noise above 5\%.}
\label{tab:MNLI}
\centering
\begin{tabular}{lcccccc} 
\hline
\multirow{2}{*}{Method}       &   \multirow{2}{*}{\shortstack{DA}}   & \multicolumn{5}{c}{Domain Annotation Noise (\%)}                \\ 
\cline{3-7}
                             &    & 0          & 5          & 10         & 15         & 20          \\ 
\hline & \\[-2ex]
Group Downsample           &  Y      & $73.19\pm1.16$ & $67.96\pm1.14$ & $65.94\pm0.39$  & $64.93\pm0.24$  & $65.31\pm0.13$  \\ 
Group Upweight               &  Y   & $\mathbf{74.31\pm0.07}$    & $69.21\pm0.46$ & $66.22\pm0.24$    & $65.26\pm0.24$ & $65.87\pm0.19$  \\ 
\hline & \\[-2ex]
Class Downsample              & N     & $65.50\pm0.07$ & $65.50\pm0.07$ & $65.50\pm0.07$ & $65.50\pm0.07$ & $65.50\pm0.07$  \\ 
Class Upweight                & N & $65.05\pm0.15$    & $65.05\pm0.15$    & $65.05\pm0.15$    & $65.05\pm0.15$    & $65.05\pm0.15$     \\ 
LLR                           & N  & $65.36\pm0.35$ & $65.47\pm0.31$  & $65.47\pm0.31$  & $65.47\pm0.31$ & $65.47\pm0.31$  \\
\hline & \\[-2ex]
RAD-UW       & N    &  $68.53\pm0.14$   & $68.53\pm0.14$        & $68.53\pm0.14$         & $68.53\pm0.14$   & $68.53\pm0.14$    \\ 
M-SELF (Our Imp.) & N & $64.02\pm0$& $64.02\pm0$& $64.02\pm0$& $64.02\pm0$& $64.02\pm0$ \\
M-SELF    (\citeauthor{labonte23towards})     & N   & $72.2\pm2.2$  & $\mathbf{72.2\pm2.2}$ & $\mathbf{72.2\pm2.2}$  & $\mathbf{72.2\pm2.2}$  & $\mathbf{72.2\pm2.2}$  \\
\hline & \\[-2ex]
\end{tabular}
\end{table*}

\begin{table}[]
\caption{\textbf{CivilComments WGA under domain label noise.} RAD-UW dramatically outperforms SELF which is unable to learn with the severe class imbalance present in CivilComments.}
\label{tab:civil_comments}
\begin{tabular}{lcccccc}
\hline 
\multirow{2}{*}{Method} & \multirow{2}{*}{DA} & \multicolumn{5}{c}{Domain Annotation Noise (\%)}                   \\ \cline{3-7} 
                        &                     & 0  & 5  & 10  & 15 & 20  \\ \hline
Group Downsample       & Y                   & $80.66\pm0.21$ & $80.42\pm0.24$ & $81.14\pm0.09$  & $81.07\pm0.15$  & $81.25\pm0.17$  \\
Group Upweight        & Y                   & $81.08\pm0.01$ & $80.96\pm0.02$ & $80.87\pm0.01$  & $80.78\pm0.03$  & $80.73\pm0.02$  \\\hline
Class Downsample          & N                   & $81.56\pm0.12$ & $81.56\pm0.12$ & $81.56\pm0.12$  & $81.56\pm0.12$  & $81.56\pm0.12$  \\
Class Upweight        & N                   & $81.50\pm0.03$ & $81.50\pm0.03$ & $81.50\pm0.03$  & $81.50\pm0.03$  & $81.50\pm0.03$  \\ 
LLR                     & N                   & $58.59\pm0.01$ & $58.59\pm0.01$ & $58.59\pm0.01$  & $58.59\pm0.01$  & $58.59\pm0.01$  \\ \hline
RAD-UW                     & N                   & $\mathbf{81.57\pm0.03}$ & $\mathbf{81.57\pm0.03}$ & $\mathbf{81.57\pm0.03}$  & $\mathbf{81.57\pm0.03}$  & $\mathbf{81.57\pm0.03}$  \\
M-SELF (Our Imp.) & N & $60.61\pm0.04$& $60.61\pm0.04$& $60.61\pm0.04$& $60.61\pm0.04$& $60.61\pm0.04$ \\
M-SELF     (\citeauthor{labonte23towards})             & N                   & $62.7\pm4.6$   & $62.7\pm4.6$   & $62.7\pm4.6$    & $62.7\pm4.6$    & $62.7\pm4.6$   \\
\hline
\end{tabular}
\end{table}

\end{document}